\colorlet{RED}{red}
\colorlet{OLIVE}{olive}
\newtheorem*{theorem*}{Theorem}
\newtheorem{theorem}{Theorem}
\newtheorem{definition}{Definition}
\newcolumntype{x}[1]{>{\centering\let\newline\\\arraybackslash\hspace{0pt}}p{#1}}
\newcommand{\fullmodel}{Hyperbolic GRAph Meta Learner}
\newcommand{\model}{H-GRAM}
\definecolor{deepblue}{RGB}{0,66,66}
\DeclareMathOperator*{\argmax}{arg\,max}
\newcommand{\notimplies}{%
  \mathrel{{\ooalign{\hidewidth$\not\phantom{=}$\hidewidth\cr$\implies$}}}}\makeatother
\title{Hyperbolic Graph Neural Networks at Scale: \\A Meta Learning Approach}
\author{
Nurendra Choudhary\\
Virginia Tech\\
Arlington, VA, USA\\
\texttt{nurendra@vt.edu}
\And
Nikhil Rao\\
Microsoft\\
Sunnyvale, CA, USA\\
\texttt{nikhilrao@microsoft.com}
\And
Chandan K. Reddy\\
Virginia Tech\\
Arlington, VA, USA\\
\texttt{reddy@cs.vt.edu}
}
\begin{document}
\maketitle
\vspace{-2em}
\begin{abstract}
The progress in hyperbolic neural networks (HNNs) research is hindered by their absence of inductive bias mechanisms, which are essential for generalizing to new tasks and facilitating scalable learning over large datasets. In this paper, we aim to alleviate these issues by learning generalizable inductive biases from the nodes' local subgraph and transfer them for faster learning over new subgraphs with a disjoint set of nodes, edges, and labels in a few-shot setting. 
We introduce a novel method, {\fullmodel} ({\model}){, that, for the tasks of node classification and link prediction, learns} transferable information from a set of support local subgraphs in the form of hyperbolic meta gradients and label hyperbolic protonets to enable faster learning over a query set of new tasks dealing with disjoint subgraphs. Furthermore, we show that an extension of our meta-learning framework also mitigates the scalability challenges seen in HNNs faced by existing approaches. Our comparative analysis shows that {\model} effectively learns and transfers information in multiple challenging few-shot settings compared to other state-of-the-art baselines. Additionally, we demonstrate that, unlike standard HNNs, our approach is able to scale over large graph datasets and improve performance over its Euclidean counterparts. 
\vspace{-1em}
\end{abstract}

\section{Introduction}
\label{sec:intro}
Graphs are extensively used in various applications, including image processing, natural language processing, chemistry, and bioinformatics. With modern graph datasets ranging from hundreds of thousands to billions of nodes, research in the graph domain has shifted towards larger and more complex graphs, e.g., the nodes and edges in the traditional Cora dataset \cite{rossi2015the} are in the order of $10^3$, whereas, the more recent OGBN datasets \cite{hu2020ogb} are in the order of $10^6$. However, despite their potential, Hyperbolic Neural Networks (HNNs), which capture the hierarchy of graphs in hyperbolic space, have struggled to scale up with the increasing size and complexity of modern datasets. 

HNNs have outperformed their Euclidean counterparts by taking advantage of the inherent hierarchy present in many modern datasets \cite{ganea2018hyperbolic,chami2019hyperbolic,choudhary2022hyperbolic}. Unlike a standard Graph Neural Network (GNN), a HNN learns node representations based on an ``anchor'' or a ``root'' node for the entire graph, and the operations needed to learn these embeddings are a function of this root node. Specifically, HNN formulations \cite{ganea2018hyperbolic} depend on the global origin (root node) for several transformation operations (such as Möbius addition, Möbius multiplication, and others), and hence focusing on subgraph structures to learn representations becomes meaningless when their relationship to the root node is not considered. Thus, state-of-the-art HNNs such as HGCN \cite{chami2019hyperbolic}, HAT \cite{gulcehre2018hyperbolic}, and HypE \cite{choudhary2021self} require access to the entire dataset to learn representations, and hence only scale to experimental datasets (with ${\approx}10^3$ nodes). Despite this major drawback, HNNs have shown impressive performance on several research domains including recommendation systems \cite{sun2021hgcf}, e-commerce \cite{choudhary2022anthem}, natural language processing \cite{dhingra2018embedding}, and knowledge graphs \cite{chami2020low,choudhary2021self}.  It is thus imperative that one develops methods to scale HNNs to larger datasets, so as to realize their full potential. To this end, we introduce a novel method, \textbf{{\fullmodel} ({\model})}, that utilizes meta-learning to learn information from local subgraphs for HNNs and transfer it for faster learning on a disjoint set of nodes, edges, and labels contained in the larger graph. As a consequence of meta-learning, {\model} also achieves several desirable benefits that extend HNNs' applicability including the ability to transfer information on new graphs (inductive learning), elimination of over-smoothing, and few-shot learning. We experimentally show that {\model} can scale to graphs of size $\mathcal{O}(10^6)$, which is $\mathcal{O}(10^3)$ times larger than previous state-of-the-art HNNs. To the best of our knowledge, there are no other methods that can scale HNNs to datasets of this size. 
\begin{figure}
    \centering
    \includegraphics[width=\linewidth]{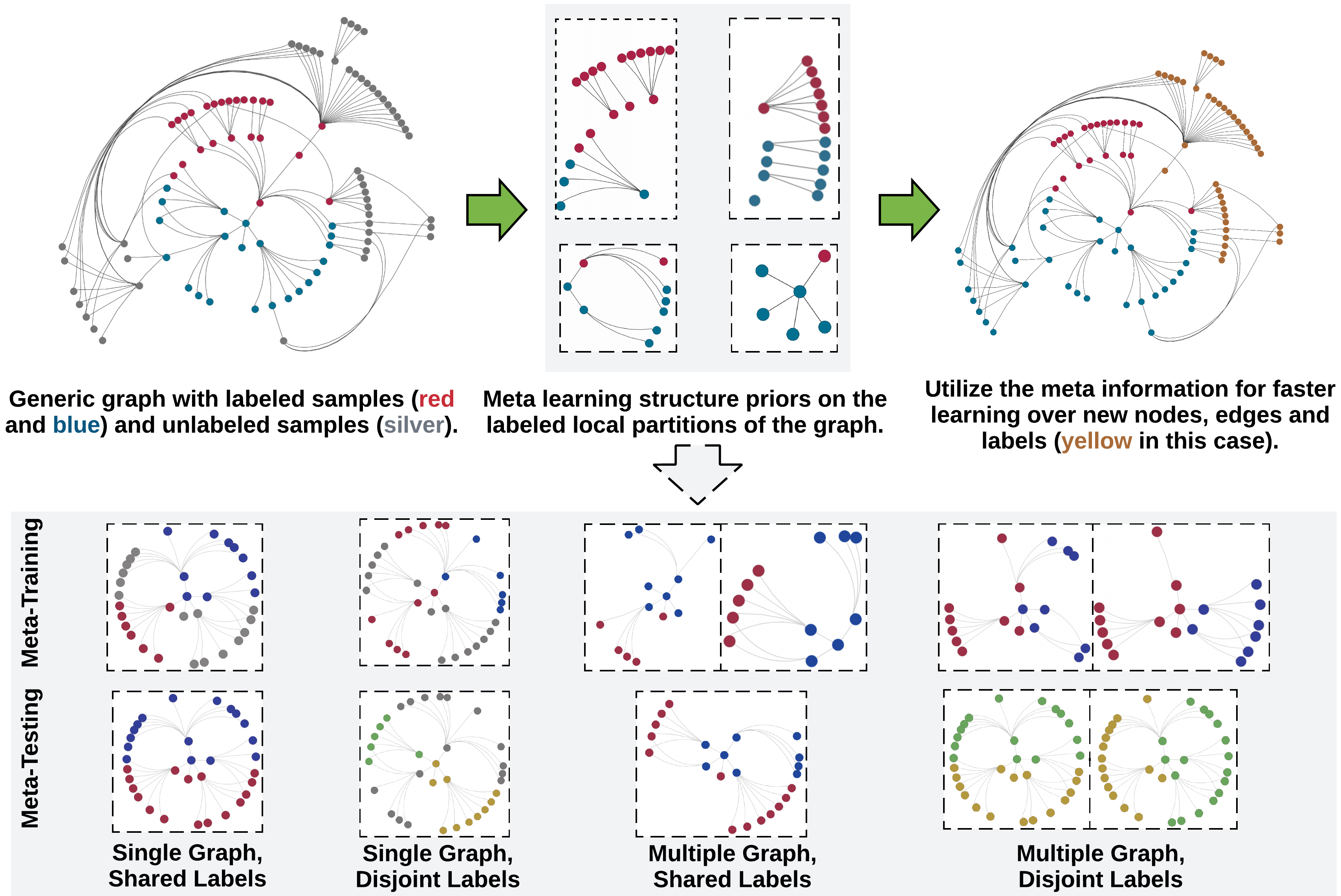}
    \caption{Meta-learning on hyperbolic neural networks. The procedure consists of two phases - (i) meta-training to update the parameters of the HNNs and learn inductive biases (meta gradients and label protonets), and (ii) meta-testing that initializes the HNNs with the inductive biases for faster learning over new graphs with a disjoint set of nodes, edges, or labels.}
    \label{fig:generic_meta}
\end{figure}

Recent research has shown that both node-level and edge-level tasks only depend on the local neighborhood for evidence of prediction \cite{zhou2020graph,huang2020graph,khatir2023a}. Inspired by the insights from such research, our model handles large graph datasets using their node-centric subgraph partitions, where each subgraph consists of a root node and the $k$-hop neighborhood around it. In {\model}, the HNN formulations establish the root node as the local origin to encode the subgraph. We theoretically show that, due to the locality of tangent space transformations in HNNs (more details in Section \ref{sec:model}), the evidence for a node's prediction can predominantly be found in the immediate neighborhood. Thus, the subgraph encodings do not lose a significant amount of feature information despite not having access to a ``global'' root node. However, due to the node-centric graph partitioning, the subgraphs are non-exhaustive, i.e., they do not contain all the nodes, edges, and labels, as previously assumed by HNNs. Thus, to overcome the issue of non-exhaustive subgraphs, we formulate four meta-learning problems (illustrated in Figure \ref{fig:generic_meta}) that learn inductive biases on a support set and transfer it for faster learning on a query set with disjoint nodes, edges, or labels. 
Our model learns inductive biases in the meta-training phase, which contains two steps - HNN update and meta update. HNN updates are regular stochastic gradient descent steps based on the loss function for each support task. The updated HNN parameters are used to calculate the loss on query tasks and the gradients are accumulated into a meta-gradient for the meta update. In the meta-testing phase, the models are evaluated on query tasks with parameters post the meta updates, as this snapshot of the model is the most adaptable for faster learning in a few-shot setting. Our main contributions are as follows:
\begin{enumerate}[noitemsep,topsep=0pt,leftmargin=*]
    \item We theoretically prove that HNNs rely on the nodes' local neighborhood for evidence in prediction, as well as, formulate HNNs to encode node-centric local subgraphs with root nodes as the local origin using the locality of tangent space transformations.
    \item We develop {\fullmodel} ({\model}), a novel method that learns meta information (as meta gradients and label protonets) from local subgraphs and generalize it to new graphs with a disjoint set of nodes, edges, and labels. Our experiments show that {\model} can be used to generalize information from subgraph partitions, thus, enabling scalability.
    \item Our analysis on a diverse set of datasets demonstrates that our meta-learning setup also solves several challenges in HNNs including inductive learning, elimination of over-smoothing, and few-shot learning in several challenging scenarios.
\end{enumerate}

\section{Related Work}
\label{sec:related}
This section reviews the relevant work in the areas of hyperbolic neural networks and meta learning.

\noindent \textbf{Hyperbolic Neural Networks:} Due to their ability to efficiently encode tree-like structures, hyperbolic space has been a significant development in the modeling of hierarchical datasets \cite{zitnik2017predicting,zitnik2019evolution}. Among its different isometric interpretations, the Poincaré ball model is the most popular one and has been applied in several HNN formulations of Euclidean networks including the recurrent (HGRU \cite{ganea2018hyperbolic}), convolution (HGCN \cite{chami2019hyperbolic}), and attention layers (HAT \cite{gulcehre2018hyperbolic}). As a result of their performance gains on hierarchical graphs, the formulations have also been extended to applications in knowledge graphs for efficiently encoding the hierarchical relations in different tasks such as representation learning (MuRP \cite{balazevic2019multi}, AttH \cite{chami2020low}) and logical reasoning (HypE \cite{choudhary2021self}). However, the above approaches have been designed for experimental datasets with a relatively small number of nodes (in the order of $10^3$), and do not scale to real-world datasets. Hence, we have designed {\model} as a meta-learning algorithm to translate the performance gains of HNNs to large datasets in a scalable manner.

\noindent \textbf{Graph Meta-learning:} Few-shot meta-learning transfers knowledge from prior tasks for faster learning over new tasks with few labels. Due to their wide applicability, they have been adopted in several domains including computer vision \cite{vilalta2002perspective,gao2022matters}, natural language processing \cite{lee2022meta} and, more recently, graphs \cite{huang2020graph,zhang2020few}. One early approach in graph meta-learning is Gated Propagation Networks \cite{liu2019learning} which learns to propagate information between label prototypes to improve the information available while learning new related labels. Subsequent developments such as MetaR \cite{chen2019meta}, Meta-NA \cite{zhou2020fast} and G-Matching \cite{zhang2020few} relied on metric-based meta learning algorithms for relational graphs, network alignment and generic graphs, respectively. These approaches show impressive performance on few-shot learning, but are only defined for single graphs. G-Meta \cite{huang2020graph} extends the metric-based techniques to handle multiple graphs with disjoint labels. However, the method processes information from local subgraphs in a Euclidean GNN, and thus, is not as capable as hyperbolic networks in encoding tree-like structures. Thus, we model {\model} to encode hierarchical information from local subgraphs and transfer it to new subgraphs with disjoint nodes, edges, and labels.

\begin{figure}[htbp]
    \centering
    \includegraphics[width=\linewidth]{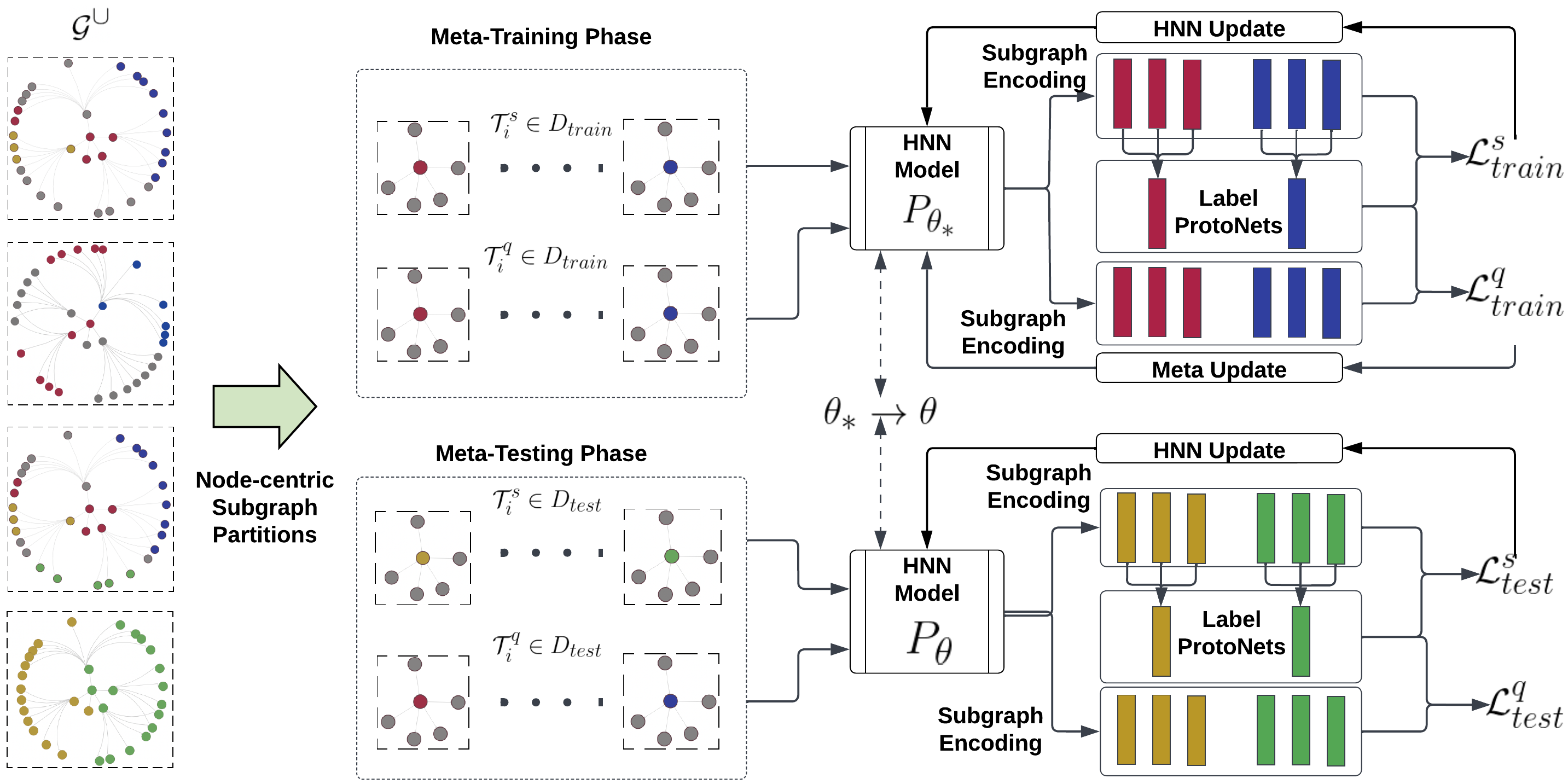}
    \caption{An overview of the proposed {\model} meta-learning framework. Here, the input graphs $\mathcal{G}^\cup$ are first partitioned into node-centric subgraph partitions. We theoretically show that encoding these subgraph neighborhoods is equivalent to encoding the entire graph in the context of node classification and link prediction tasks. {\model} then uses an HGCN encoder to produce subgraph encodings, which are further utilized to get label prototypes. Using the HGCN gradient updates and label prototypes, the HNN model's parameters $P_{\theta^*}$ is updated through a series of weight updates and meta updates for $\eta$ meta-training steps. The parameters are then transferred to the meta-testing stage $P_{\theta^*\rightarrow\theta}$ and further trained on $\mathcal{D}_{test}^s$ and evaluated on $\mathcal{D}_{test}^q$.}
    \label{fig:model}
\end{figure}

\section{The Proposed \model ~Model}
\label{sec:model}
In this section, we define the problem setup for different meta-learning scenarios and describe our proposed model, {\model}, illustrated in Figure \ref{fig:model}. For better clarity, we explain the problem setup for node classification and use HGCN as the exemplar HNN model. However, the provided setup can be extended to link prediction or to other HNN models, which we have evaluated in our experiments. The preliminaries of hyperbolic operations and meta-learning are provided in Appendix \ref{app:preliminaries}.

\subsection{Problem Setup}
\label{sec:problem}
Our problem consists of a group of tasks $\mathcal{T}_i \in \mathcal{T}$ which are divided into a support set $\mathcal{T}^s$ and query set $\mathcal{T}^q$, where $\mathcal{T}^s \cap \mathcal{T}^q = \phi$. Furthermore, each task $\mathcal{T}_i$ is a batch of node-centric subgraphs $S_u$ with a corresponding label $Y_u$ (class of root node in node classification or root link in link prediction). The subgraphs $S_u$ could be the partitions derived from a single graph or multiple graphs, both denoted by $\mathcal{G}^\cup=\mathcal{G}^s\cup\mathcal{G}^q$. We also define $Y_s = \{Y_u \in \mathcal{T}^s\}$ and $Y_q = \{Y_u \in \mathcal{T}^q\}$ as the set of labels in the support and query set respectively. The primary goal of meta-learning is to learn a predictor using the support set, $P_{\theta_*}(Y_s|\mathcal{T}^s)$ such that the model can quickly learn a predictor $P_{\theta}(Y_s|\mathcal{T}^s)$ on the query set. Following literature in this area \cite{huang2020graph}, the problem categories are defined as follows:
\begin{enumerate}[leftmargin=*,noitemsep,topsep=0pt]
    \item \textbf{Single graph, shared labels \boldmath$\langle SG,SL\rangle$}: The objective is to learn the meta-learning model $P_{\theta_*\rightarrow\theta}$, where $Y_s = Y_q$ and $|\mathcal{G}^s|=|\mathcal{G}^q|=1$. It should be noted that the tasks are based on subgraphs, so $|\mathcal{G}^s|=|\mathcal{G}^q|=1 \notimplies |\mathcal{T}^s|=|{T}^q|=1$. Also, this problem setup is identical to the standard node classification task considering $\mathcal{T}_i^q \in D_{test}$ to be the test set.
    \item \textbf{Single graph, disjoint labels \boldmath$\langle SG,DL\rangle$}: This problem operates on the same graph in the support and query set, however unlike the previous one, the label sets are disjoint. The goal is to learn the model $P_{\theta_* \rightarrow \theta}$, where $|\mathcal{G}^s|=|\mathcal{G}^q|=1\text{ and }Y_s \cap Y_q = \phi$.
    \item \textbf{Multiple graphs, shared labels \boldmath$\langle MG,SL\rangle$}: This problem setup varies in terms of the dataset it handles, i.e., the dataset can contain multiple graphs instead of a single one. However, our method focuses on tasks which contain node-centric subgraphs, and hence, the model's aim is the same as problem 1. The aim is to learn the predictor model $P_{\theta_*\rightarrow\theta}$, where $Y_s = Y_q$ and $|\mathcal{G}^s|,|\mathcal{G}^q| > 1$. 
    \item \textbf{Multiple graphs, disjoint labels \boldmath$\langle MG,DL\rangle$}: In this problem, the setup is similar to the previous one, but only with disjoint labels instead of shared ones, i.e., learn a predictor model $P_{\theta_*\rightarrow\theta}$, where $Y_s \cap Y_q = \phi$ and $|\mathcal{G}^s|,|\mathcal{G}^q| > 1$.
\end{enumerate}
From the problem setups, we observe that, while they handle different dataset variants, the base HNN model operates on the $(S_u,Y_u)$ pair. So, we utilize a hyperbolic subgraph encoder and prototypical labels to encode $S_u$ and get a continuous version of $Y_u$ for our meta-learning algorithm, respectively.

\subsection{Hyperbolic Subgraph Encoder}
\label{sec:encoding}
In previous methods \cite{hamilton2017inductive,huang2020graph,graphsaint-iclr20}, authors have shown that nodes' local neighborhood provides some informative signals for the prediction task. While the theory is not trivially extensible, we use the local tangent space of Poincaré ball model to prove that the local neighborhood policy holds better for HNN models. The reason being that, while message propagation is linear in Euclidean GNNs \cite{zhou2020graph}, it is exponential in HNNs. Hence, a node's influence, as given by Definition \ref{df:node_influence}, outside its neighborhood decreases exponentially.
\begin{definition}
\label{df:node_influence}
The influence of a hyperbolic node vector $x^{\mathbb{H}}_v$ on node  $x^{\mathbb{H}}_u$ is defined by the influence score $\mathcal{I}_{uv} = exp_0^c\left(\left\|\frac{\partial \log^x_0\left(x^{\mathbb{H}}_u\right)}{\partial \log^x_0\left(x^{\mathbb{H}}_v\right)}\right\|\right)$.
\end{definition}
\begin{definition}
\label{df:graph_influence}
The influence of a graph $\mathcal{G}$ with set of vertices $\mathcal{V}$ on a node $u\in\mathcal{V}$ is defined as $\mathcal{I}_{\mathcal{G}}(u) = exp_0^c\left((\sum_{v \in \mathcal{V}}log_0^c\left(\mathcal{I}_{uv}\right)\right)$.
\end{definition}
\begin{theorem}
\label{th:node_influence}
For a set of paths $P_{uv}$ between nodes $u$ and $v$, let us define $D^{p_{i}}_{g\mu}$ as the geometric mean of nodes' degree in a path $p_{i} \in P_{uv}$, $p_{uv}$ as the shortest path, and $\mathcal{I}_{uv}$ as the influence of node $v$ on $u$. Also, let us say $D^{min}_{g\mu} = min\left\{D^{p_{i}}_{g\mu} \forall p_{i} \in P_{uv}\right\}$, then the relation $\mathcal{I}_{uv} \leq exp_u^c\left(K/\left(D^{min}_{g\mu}\right)^{\|p_{uv}\|}\right)$ (where $K$ is a constant) holds for message propagation in HGCN models.
\end{theorem}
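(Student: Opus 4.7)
The plan is to reduce the hyperbolic influence bound to an analogous Euclidean GCN influence bound by exploiting the fact that HGCN's propagation is carried out in the tangent space at the origin via $\log_0^c$ and $\exp_0^c$, and then lifting the resulting tangent-space estimate back to the Poincaré ball through $\exp_u^c$. First, I would write out the $\ell$-layer HGCN recurrence for the embedding of node $u$: at each layer, the hyperbolic neighbors are mapped to the tangent space at the origin by $\log_0^c$, aggregated with the normalized adjacency weights $1/\sqrt{d_u d_v}$ along each edge $(u,v)$, transformed by a linear map, and then mapped back via $\exp_0^c$. Since $\exp_0^c\circ\log_0^c = \mathrm{id}$, the derivative $\partial \log_0^c(x_u^{\mathbb{H}}) / \partial \log_0^c(x_v^{\mathbb{H}})$ in Definition~\ref{df:node_influence} can be expanded by the chain rule into a sum over paths $p_i\in P_{uv}$ of products of per-edge normalization factors and (bounded) linear weights.

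Next, I would invoke the Euclidean GCN influence identity of Xu et al.\ (the ``jumping knowledge'' analysis), which states that the Jacobian $\partial h_u/\partial h_v$ in a GCN is, up to an absorbed constant coming from activations and weight matrices, proportional to the normalized random-walk distribution from $u$ to $v$. Applied in the tangent space at the origin, each path $p_i$ contributes a factor $\prod_{(a,b)\in p_i} 1/\sqrt{d_a d_b}$, which is exactly $1/(D^{p_i}_{g\mu})^{\|p_i\|}$ by the definition of geometric mean of degrees along $p_i$. Summing over paths and bounding each by the dominant term along the shortest path $p_{uv}$, and the degree mean by its minimum $D^{min}_{g\mu}$, yields a tangent-space bound of the form $\bigl\|\partial \log_0^c(x_u^{\mathbb{H}})/\partial \log_0^c(x_v^{\mathbb{H}})\bigr\| \;\leq\; K/(D^{min}_{g\mu})^{\|p_{uv}\|}$ for a constant $K$ absorbing the (bounded) weight-matrix norms and activation Lipschitz constants.

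Finally, I would apply $\exp_u^c$ to both sides: because $\exp_u^c$ is monotone increasing on nonnegative scalars when viewed through the standard conformal factor of the Poincaré ball, and because $\mathcal{I}_{uv}$ is defined as $\exp_0^c$ of the tangent-space norm, the scalar bound lifts to $\mathcal{I}_{uv}\leq \exp_u^c\!\bigl(K/(D^{min}_{g\mu})^{\|p_{uv}\|}\bigr)$, giving the claimed inequality. The main obstacle I anticipate is the careful bookkeeping of constants coming from HGCN's nonlinear components (bias curvature-correction and nonlinear activation $\sigma$) when pushing the Jacobian through the exp/log sandwich at every layer; these must be uniformly bounded so that they can be absorbed into $K$ without destroying the $1/(D^{min}_{g\mu})^{\|p_{uv}\|}$ decay. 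A secondary subtlety is ensuring that the change of base point from the origin (where $\log_0^c$ is taken) to $u$ (where $\exp_u^c$ is applied) is handled by standard parallel-transport estimates in the Poincaré ball, which do not alter the exponential decay in path length.
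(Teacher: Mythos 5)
Your plan follows essentially the same route as the paper's proof: unroll the HGCN aggregation in the tangent space, expand the Jacobian $\partial x_u/\partial x_v$ as a sum over paths in $P_{uv}$ whose per-path contribution is a product of degree normalizations equal to $1/(D^{p_i}_{g\mu})^{\|p_i\|}$, bound the sum by the dominant term and then by $K/(D^{min}_{g\mu})^{\|p_{uv}\|}$, and lift the scalar bound through the exponential map. The only cosmetic differences are that you invoke the Euclidean path-sum identity of Xu et al.\ as a black box (the paper derives it directly from the $1/D_u$ row-normalized aggregation rather than the symmetric $1/\sqrt{d_ud_v}$ form) and that you explicitly flag the origin-versus-$u$ base-point issue, which the paper also glosses over.
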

Theorem \ref{th:node_influence} shows that the influence of a node decreases exponent-of-exponentially ($exp_u^c = \mathcal{O}(e^n)$) with increasing distance $\|p_{uv}\|$. Thus, we conclude that encoding the local neighborhood of a node is sufficient to encode its features for label prediction.
\begin{definition}
The information loss between encoding an entire graph $\mathcal{G}$ and a subgraph $S_u$ with root node $u$ is defined as $\delta_{\mathbb{H}}(\mathcal{G},S_u) = exp^c_0\left(log^c_0\left(\mathcal{I}_{\mathcal{G}}(u)\right)-log^c_0\left(\mathcal{I}_{S_u}(u)\right)\right)$.
\end{definition}
\begin{theorem}
\label{th:subgraph}
For a subgraph $S_u$ of graph $\mathcal{G}$ centered at node $u$, let us define a node $v \in \mathcal{G}$ with maximum influence on $u$, i.e., $v=\argmax_t(\{\mathcal{I}_ut,t\in \mathcal{N}(u)\setminus u\})$. For a set of paths $P_{uv}$ between nodes $u$ and $v$, let us define $D^{p_{i}}_{g\mu}$ as the geometric mean of degree of nodes in a path $p_{i} \in P_{uv}$, $\|p_{uv}\|$ is the shortest path length, and $D^{min}_{g\mu} = min\left\{D^{p_{i}}_{g\mu} \forall p_{i} \in P_{uv}\right\}$. Then, the information loss is bounded by $\delta_{\mathbb{H}}(\mathcal{G},S_u) \leq exp_u^c\left(K/\left(D^{min}_{g\mu}\right)^{\|p_{uv}\|+1}\right)$ (where $K$ is a constant).
\end{theorem}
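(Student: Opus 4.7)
The plan is to unfold the definition of $\delta_{\mathbb{H}}(\mathcal{G}, S_u)$ through the tangent space, reduce it to an aggregate influence over the nodes that lie \emph{outside} $S_u$, and then apply Theorem \ref{th:node_influence} term-by-term to get an exponent-of-exponential decay controlled by the dominant external node $v$.

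First, I would pass to the tangent space at the origin. Since $\log_0^c$ and $\exp_0^c$ are mutual inverses, applying $\log_0^c$ to both $\mathcal{I}_{\mathcal{G}}(u)$ and $\mathcal{I}_{S_u}(u)$ gives
\begin{equation*}
\log_0^c \mathcal{I}_{\mathcal{G}}(u) - \log_0^c \mathcal{I}_{S_u}(u) = \sum_{w \in \mathcal{V}(\mathcal{G})} \log_0^c \mathcal{I}_{uw} - \sum_{w \in \mathcal{V}(S_u)} \log_0^c \mathcal{I}_{uw} = \sum_{w \in \mathcal{V}(\mathcal{G}) \setminus \mathcal{V}(S_u)} \log_0^c \mathcal{I}_{uw},
\end{equation*}
so $\delta_{\mathbb{H}}(\mathcal{G}, S_u) = \exp_0^c\bigl(\sum_{w \notin S_u} \log_0^c \mathcal{I}_{uw}\bigr)$. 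The information loss is therefore entirely driven by nodes outside the local subgraph.

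Next, I would apply Theorem \ref{th:node_influence} to each external node $w \notin S_u$. Because $S_u$ is the $k$-hop neighborhood of $u$, any shortest path $p_{uw}$ to an external $w$ must traverse at least one extra hop beyond the boundary, i.e.\ $\|p_{uw}\| \geq \|p_{uv'}\|+1$ for the nearest boundary node $v'$; by the hypothesis, the designated $v$ maximizes $\mathcal{I}_{uw}$ over external neighbors, so it realizes the smallest such path length. Combining with $D^{min}_{g\mu}$ being the worst-case geometric-mean degree along these paths, Theorem \ref{th:node_influence} yields $\mathcal{I}_{uw} \leq \exp_u^c\!\left(K'/(D^{min}_{g\mu})^{\|p_{uw}\|}\right)$ for every external $w$, with the dominant term at $w=v$.

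Finally, I would carry out the summation in the tangent space. The number of nodes at the $\ell$-th shell is bounded by $(D^{min}_{g\mu})^{\ell}$, which exactly cancels the reciprocal in the influence bound, so the geometric-type series over shells $\ell \geq \|p_{uv}\|$ collapses to a constant multiple of the leading term at $v$. Pulling the constant into $K$ and applying $\exp_0^c$ (which is monotone on the ray through the origin), we obtain
\begin{equation*}
\delta_{\mathbb{H}}(\mathcal{G}, S_u) \;\leq\; \exp_u^c\!\left(K/(D^{min}_{g\mu})^{\|p_{uv}\|+1}\right),
\end{equation*}
where the extra ``$+1$'' in the exponent arises because every external node is at least one hop farther from $u$ than $v$ itself, giving a baseline damping of $1/D^{min}_{g\mu}$ on top of the Theorem \ref{th:node_influence} bound.

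The main obstacle will be the summation step: controlling the series over external nodes rigorously requires bounding the shell sizes in terms of $D^{min}_{g\mu}$ and carefully shuttling between the tangent spaces at $0$ and at $u$, since $\log_0^c$ and $\log_u^c$ differ by the parallel transport implicit in the Poincaré model. Getting the $+1$ in the exponent without paying an extra additive constant, and verifying that the absorbed factor really can be folded into $K$ uniformly in the graph size, is where the argument is most delicate.
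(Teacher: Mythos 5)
Your proposal follows essentially the same route as the paper's proof: reduce $\log_0^c\mathcal{I}_{\mathcal{G}}(u)-\log_0^c\mathcal{I}_{S_u}(u)$ to a sum of influence terms over the nodes outside $S_u$, apply Theorem \ref{th:node_influence} term-by-term with the observation that every external node is at least one hop beyond $\|p_{uv}\|$, and exponentiate. The only difference is at the summation step, where the paper simply bounds the sum by $(n-m)$ times its largest term and folds $(n-m)$ into $K$, whereas you propose a shell-counting geometric-series argument — a more careful treatment of the very point (uniformity of $K$ in the graph size) that the paper glosses over.
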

Theorem \ref{th:subgraph} shows that encoding the local subgraph is a $e^{\|p_{uv}\|}$ order approximation of encoding the entire graph, and thus, with high enough $\|p_{uv}\|$, the encodings are equivalent. Note that $\|p_{uv}\|$ is equivalent to the subgraph's neighborhood size ($k$ in $k$-hop). {This shows that encoding a node's local neighborhood is sufficient to encode its features}. Theorem proofs are provided in Appendix \ref{app:theorem_proofs}.
The above theorems provide us with the theoretical justification for encoding local subgraphs into our meta-learning framework. Hence, we partition the input $\mathcal{G}^\cup$ into subgraphs $S_u=V\times E$ centered at root node $u$, with $|V|$ nodes, $|E|$ edges, a neighborhood size $k$, and the corresponding label $Y_u$. The subgraphs are processed through an $k$-layer HGCN network and the Einstein midpoint \cite{ungar2005analytic} of all the node encodings are taken as the overall encoding of the subgraph $S_u$. For a given subgraph $S_u=(A,X)$, where $A\in\mathbb{H}^{\|V\|\times\|V\|}$ is the local adjacency matrix and $X \in \mathbb{H}^{\|V\|\times m}$ are the node feature vectors of $m$ dimension, the encoding procedure given can be formalized as:
\begin{align}
    h_u &=  HGCN_{\theta^*}(A,X)\text{, where }h_u \in \mathbb{H}^{\|V\|\times d}\label{eq:hgcn}\\
    e_u &= \frac{\sum_{i=1}^{\|V\|}\upgamma_{iu}h_{iu}}{\sum_{i=1}^{\|V\|}\upgamma_{iu}}\text{, where } \upgamma_{iu} = \frac{1}{\sqrt{1-\|h_{iu}\|^2}}\label{eq:encode}
\end{align}
where $HGCN(A,X) \in \mathbb{H}^{\|V\|\times d}$ is the output of k-layer HGCN with $d$ output units and $e_u \in \mathbb{H}^{d}$ is the final subgraph encoding of $S_u$. {$\upgamma_{iu}$ is the Lorentz factor of the hyperbolic vector $h_{iu}$ that indicates its weightage towards the Einstein midpoint.}

\subsection{Label Prototypes}
\label{sec:proto_label}
Label information is generally categorical in node classification tasks. However, this does not allow us to pass inductive biases from the support set to the query set. Hence, to circumvent this issue, we use prototypical networks \cite{snell2017prototypical} as our label encoding. {Our approach constructs continuous label prototypes by using the mean of meta-training nodes' features that belong to the label. These prototypes are then employed to classify meta-testing samples based on their similarity to the corresponding meta-training label prototypes. This enables our model to handle new, non-exhaustive labels in an inductive manner, without the need for additional training data.} The primary idea is to form continuous label prototypes using the mean of nodes that belong to the label. To this end, the continuous label prototype of a label $y_k$ is defined as 
    $c_k = \frac{\sum_{Y_u=y_k}\upgamma_{i}e_{i}}{\sum_{Y_u=y_k}\upgamma_{i}}\text{, where }\upgamma_{i} = \frac{1}{\sqrt{1-\|e_{i}\|^2}}$,
 and $e_i \in \mathbb{H}^{d}$ is encoding of subgraphs $S_u$ with labels $Y_u=y_k$. For each $S_u$ with class $y_k$, we compute the class distribution vector as $p_k = \frac{e^{(-d_\mathbb{H}^c(e_u,c_k))}}{\sum_k e^{(-d_\mathbb{H}^c(e_u,c_k))}}\text{, where } d_\mathbb{H}^c(e_u,c_k) = \frac{2}{\sqrt{c}}\tanh^{-1}\left(\sqrt{c}\|-e_u\oplus c_k\|\right)$ and the loss for HNN updates $\mathcal{L}(p,y) = \sum_{j} y_i\log{p_j}$,
where $y_i$ is the one-hot encoding of the ground truth. The class distribution vector $p_k$ is a softmax over the hyperbolic distance of subgraph encoding to the label prototypes, which indicates the probability that the subgraph belongs to the class $y_k$. The loss function $\mathcal{L}(p,y)$ is the cross-entropy loss between ground truth labels $y$ and the class distribution vector $p$.
\subsection{Meta-Learning}
\label{sec:meta-learning}
In the previous section, we learned a continuous label encoding that is able to capture inductive biases from the subgraph. In this section, we utilize the optimization-based MAML algorithm \cite{finn2017model} to transfer the inductive biases from the support set to the query set.  To this end, we sample a batch of tasks, where each task $\mathcal{T}_i = \{S_i,Y_i\}_{i=1}^{\|\mathcal{T}_i\|}$. In the meta-training phase, we first optimize the HNN parameters using the Riemannian stochastic gradient descent (RSGD) \cite{bonnabel2013stochastic} on support loss, i.e., for each $\mathcal{T}^s_i\in\mathcal{T}^s_{train}: \theta^*_j \leftarrow exp^c_{\theta^*_j}(-\alpha \nabla \mathcal{L}^s)$, where $\alpha$ is the learning rate of RSGD. Using the updated parameters $\theta^*_j$, we record the evaluation results on the query set, i.e., loss on task $\mathcal{T}^q_i\in\mathcal{T}^q_{train}$ is $\mathcal{L}^q_i$. The above procedure is repeated $\eta$ times post which $\mathcal{L}^q_i$ over the batch of tasks $\mathcal{T}^q_i\in\mathcal{T}^q_{train}$ is accumulated for the meta update $\theta^* \leftarrow exp^c_{\theta^*}(-\beta \nabla \sum_i\mathcal{L}^q_i)$. The above steps are repeated with the updated $\theta^*$ and a new batch of tasks till convergence. The final updated parameter set $\theta^*\rightarrow\theta$ is transferred to the meta-testing phase. In meta-testing, the tasks $\mathcal{T}^s_i\in\mathcal{T}^s_{test}$ are used for RSGD parameter updates, i.e., $\mathcal{T}^s_i\in\mathcal{T}^s_{test}: \theta_j \leftarrow exp^c_{\theta_j}(-\alpha \nabla \mathcal{L}^s)$ until convergence. The updated parameters $\theta$ are used for the final evaluation on $\mathcal{T}^q_i\in\mathcal{T}^q_{test}$. Our meta-learning procedure is further detailed in Appendix \ref{app:algorithm} and the implementation code with our experiments is available at \url{https://github.com/Akirato/HGRAM}. Details on implementation and broader impacts are provided in Appendices \ref{app:hyperparameter} and \ref{app:broader}, respectively.

\subsection{Implementation Details}
\label{app:hyperparameter}
{\model} is primarily implemented in Pytorch \cite{paszke2019pytorch}, with geoopt \cite{kochurov2020geoopt} and GraphZoo \cite{vyas2022graphzoo} as support libraries for hyperbolic formulations. Our experiments are conducted on a Nvidia V100 GPU with 16 GB of VRAM.
For gradient descent, we employ Riemannian Adam \cite{reddi2018on} with an initial learning rate of 0.01 and standard $\beta$ values of 0.9 and 0.999. The other hyper-parameters were selected based on the best performance on the validation set ($\mathcal{D}_{val}$) under the given computational constraints. In our experiments, we empirically set $k=2, d=32, h=4$, and $\eta=10$. 
We explore the following search space and tune our hyper-parameters for best performance. The number of tasks in each batch are varied among 4, 8, 16, 32, and 64. The learning rate explored for both HNN updates and meta updates are $10^{-2}, 5\times 10^{-3}, 10^{-3}$ and $5\times 10^{-4}$. The size of hidden dimensions are selected from among 64, 128, and 256. The final best-performing hyper-parameter setup for real-world datasets is presented in Table \ref{tab:hyperparameter}.

\section{Experimental Setup}
\label{sec:experiments}
Our experiments aim to evaluate the performance of the proposed {\model} model and investigate the following research questions:
\begin{itemize}[noitemsep,topsep=0pt]
\vspace{-.5em}
\item[\textbf{RQ1:}]{Does our hyperbolic meta-learning algorithm outperform the Euclidean baselines on various meta-learning problems?} 
\item[\textbf{RQ2:}]{How does our model perform and scale in comparison to other HNN formulations in standard graph problems?}
\item[\textbf{RQ3:}]{How does {\model} model's performance vary with different few-shot settings, i.e., different values of $k$ and $N$?}
\item[\textbf{RQ4:}]{What is the importance of different meta information components?}
\end{itemize}
We use a set of standard benchmark datasets and baseline methods to compare the performance of {\model} on meta-learning and graph analysis tasks. The HNN models do not scale to the large datasets used in the meta-learning task, and hence, we limit our tests to Euclidean baselines. To compare against HNN models, we rely on standard node classification and link prediction on small datasets. Also, we do not consider other learning paradigms, such as self-supervised learning because they require an exhaustive set of nodes and labels and do not handle disjoint problem settings.

\subsection{Datasets} 
\label{sec:datasets}
For the task of meta-learning, we utilize the experimental setup from earlier approaches \cite{huang2020graph}; two synthetic datasets to understand if {\model} is able to capture local graph information and five real-world datasets to evaluate our model's performance in a few-shot setting.
\begin{itemize}[leftmargin=*,noitemsep,topsep=0pt]
\vspace{-.5em}
    \item \textbf{Synthetic Cycle} \cite{huang2020graph} contains multiple graphs with cycle as the basis with different topologies (House, Star, Diamond, and Fan) attached to the nodes on the cycle. The classes of the node are defined by their topology.
    \item \textbf{Synthetic BA} \cite{huang2020graph} uses Barabási-Albert (BA) graph as the basis with different topologies planted within it. The nodes are labeled using spectral clustering over the Graphlet Distribution Vector \cite{prvzulj2007biological} of each node. 
    \item \textbf{ogbn-arxiv} \cite{hu2020ogb} is a large citation graph of papers, where titles are the node features and subject areas are the labels.
    \item \textbf{Tissue-PPI} \cite{zitnik2017predicting,hamilton2017inductive} contains multiple protein-protein interaction (PPI) networks collected from different tissues, gene signatures and ontology functions as features and labels, respectively.
    \item \textbf{FirstMM-DB} \cite{neumann2013graph} is a standard 3D point cloud link prediction dataset.
    \item \textbf{Fold-PPI} \cite{zitnik2017predicting} is a set of tissue PPI networks, where the node features and labels are the conjoint triad protein descriptor \cite{shen2007predicting} and protein structures, respectively. 
    \item \textbf{Tree-of-Life} \cite{zitnik2019evolution} is a large collection of PPI networks, originating from different species.
\end{itemize}

\subsection{Baseline Methods}
\label{sec:baselines}
For comparison with HNNs, we utilize the standard benchmark citation graphs of Cora \cite{rossi2015the}, Pubmed \cite{galileo2012query}, and Citeseer \cite{rossi2015the}. For the baselines, we select the following methods to understand {\model}'s performance compared to state-of-the-art models in the tasks of meta-learning and standard graph processing.
\begin{itemize}[leftmargin=*,noitemsep,topsep=0pt]
\vspace{-.5em}
    \item \textbf{Meta-Graph} \cite{bose2020metagraph}, developed for few-shot link prediction over multiple graphs, utilizes VGAE \cite{kipf2016variational} model with additional graph encoding signals.
    \item \textbf{Meta-GNN} \cite{zhou2019meta} is a MAML developed over simple graph convolution (SGC) network \cite{wu2019simplifying}.
    \item \textbf{FS-GIN} \cite{xu2018how} runs Graph Isomorphism Network (GIN) on the entire graph and then uses the few-shot labelled nodes to propagate loss and learn.
    \item \textbf{FS-SGC} \cite{wu2019simplifying} is the same as FS-GIN but uses SGC instead of GIN as the GNN network.
    \item \textbf{ProtoNet} \cite{snell2017prototypical} learn a metric space over label prototypes to generalize over unseen classes.
    \item \textbf{MAML} \cite{finn2017model} is a Model-Agnostic Meta Learning (MAML) method that learns on multiple tasks to adapt the gradients faster on unseen tasks.
    \item \textbf{HMLP} \cite{ganea2018hyperbolic}, \textbf{HGCN} \cite{chami2019hyperbolic}, and \textbf{HAT} \cite{gulcehre2018hyperbolic} are the hyperbolic variants of Euclidean multi-layer perceptron (MLP), Graph Convolution Network (GCN), and Attention (AT) networks that use hyperbolic gyrovector operations instead of the vector space model.
\end{itemize}
It should be noted that not all the baselines can be applied to both node classification and link prediction. Hence, we compare our model against the baselines only on the applicable scenarios.

\section{Experimental Results}
\label{sec:results}
We adopt the following standard problem setting which is widely studied in the literature \cite{huang2020graph}. The details of the datasets used in our experiments are provided in Table \ref{tab:dataset}. In the case of synthetic datasets, we use a 2-way setup for disjoint label problems, and for the shared label problems the cycle graph and Barabási–Albert (BA) graph have 17 and 10 labels, respectively. The evaluation of our model uses 5 and 10 gradient update steps in meta-training and meta-testing, respectively. In the case of real-world datasets, we use 3-shot and 16-shot setup for node classification and link prediction, respectively. For real-world disjoint labels problem, we use the 3-way classification setting. The evaluation of our model uses 20 and 10 gradient update steps in meta-training and meta-testing, respectively. In the case of Tissue-PPI dataset, we perform each 2-way protein function task three times and average it over 10 iterations for the final result. In the case of link prediction task, we need to ensure the distinct nature of support and query set in all meta-training tasks. For this, we hold out a fixed set comprised of 30\% and 70\% of the edges as a preprocessing step for every graph for the support and query set, respectively. 
\begin{table}[htbp]
\small
\centering
    \caption{Basic statistics of the datasets used in our experiments. The columns present the dataset task (node classification or link prediction), number of graphs \boldmath$|\mathcal{G}^\cup|$, nodes \boldmath$|V|$, edges \boldmath$|E|$, node features \boldmath$|X|$, and labels \boldmath$|Y|$. Node, Link, and N/L indicates whether the datasets are used for node classification, link prediction, and both, respectively.}
    \vspace{0.05in}
\begin{tabular}{ll|ccccc}
\hline
\textbf{Dataset} & \textbf{Task} & \boldmath$|\mathcal{G}^\cup|$ & \boldmath$|V|$  & \boldmath$|E|$  & \boldmath$|X|$ & \boldmath$|Y|$ \\\hline
Synth. Cycle & Node & 10    & 11,476    & 19,687    & - & 17 \\
Synth. BA    & Node & 10    & 2,000     & 7,647     & - & 10 \\
ogbn-arxiv   & Node & 1     & 169,343   & 1,166,243 & 128 & 40 \\
Tissue-PPI   & Node & 24    & 51,194    & 1,350,412 & 50  & 10 \\
FirstMM-DB   & Link & 41    & 56,468    & 126,024   & 5   & 2  \\
Fold-PPI     & Node & 144   & 274,606   & 3,666,563 & 512 & 29 \\
Tree-of-Life & Link & 1,840 & 1,450,633 & 8,762,166 & - & 2 \\
Cora         & N/L & 1 & 2,708 & 5,429 & 1,433 & 7\\
Pubmed       & N/L & 1 & 19,717 & 44,338 & 500 & 3\\
Citeseer     & N/L & 1 & 3,312 & 4,732 & 3,703 & 6\\\hline
\end{tabular}
    \label{tab:dataset}
\end{table}

\begin{table}[htbp]
    \scriptsize 
    \setlength{\tabcolsep}{2pt}
    \centering
    \caption{Performance of {\model} and the baselines on synthetic and real-world datasets. The top three rows define the task, problem setup (Single Graph (SG), Multiple Graphs (MG), Shared Labels (SL) or Disjoint Labels (DL)) and dataset. The problems with disjoint labels use a 2-way meta-learning setup, and in the case of shared labels, the cycle (S. Cy) and BA (S. BA) graph have 17 and 10 labels, respectively. In our evaluation, we use 5 and 10 gradient update steps in meta-training and meta-testing, respectively. The columns present the average multi-class classification accuracy and 95\% confidence interval over five-folds. Note that the baselines are only defined for certain tasks, ``-'' implies that the baseline is not defined for the task and setup. Meta-Graph is only defined for link prediction. The confidence intervals for the results are provided in Appendix \ref{app:large_confidence}.}
    \vspace{0.05in}
    \begin{tabular}{l|cccccc|ccc|cc}
        \hline
        \multicolumn{1}{c}{} & \multicolumn{6}{c|}{\textbf{Synthetic Datasets}} & \multicolumn{3}{c|}{\textbf{Real-world (Node Classification)}} & \multicolumn{2}{c}{\textbf{Real-world (Link Prediction)}}  \\
        \hline
      \textbf{Task Setup} & \multicolumn{2}{c}{\boldmath$\langle SG,DL\rangle$}& \multicolumn{2}{c}{\boldmath$\langle MG,SL\rangle$} & \multicolumn{2}{c|}{\boldmath$\langle MG,DL\rangle$} & \boldmath$\langle SG,DL\rangle$ & \boldmath$\langle MG,SL\rangle$ & \boldmath$\langle MG,DL\rangle$ & \boldmath$\langle MG,SL\rangle$ & \boldmath$\langle MG,SL\rangle$ \\
        \textbf{Dataset} & \textbf{S. Cy} & \textbf{S. BA} & \textbf{S. Cy} & \textbf{S. BA} & \textbf{S. Cy} & \textbf{S. BA} & \textbf{ogbn-arxiv} & \textbf{Tissue-PPI} & \textbf{Fold-PPI} & \textbf{FirstMM-DB} & \textbf{Tree-of-Life} \\
        \hline
        \textbf{Meta-Graph}  &-&-&-&-&-&-&-&-&-&.719&.705\\
        \textbf{Meta-GNN} &.720&.694&-&-&-&-&.273&-&-&-&-\\
        \textbf{FS-GIN} &.684&.749&-&-&-&-&.336&-&-&-&-\\
        \textbf{FS-SGC} &.574&.715&-&-&-&-&.347&-&-&-&-\\
        \textbf{ProtoNet} &.821&.858&.282&.657&.749&.866&.372&.546&.382&.779&.697\\
        \textbf{MAML}   &.842&.848&.511&.726&.653&.844&.389&.745&.482&.758&.719\\
        \textbf{G-META} &.872&.867&.542&.734&.767&.867&.451&.768&.561&.784&.722\\
        \hline
        \textbf{{\model}} &\textbf{.883}&\textbf{.873}&\textbf{.555}&\textbf{.746}&\textbf{.779}&\textbf{.888}&\textbf{.472}&\textbf{.786}&\textbf{.584}&\textbf{.804}&\textbf{.742}\\
        \hline
    \end{tabular}
    \label{tab:main_results}
\end{table}

\subsection{RQ1: Performance of Meta-Learning}
\label{sec:exp1}
To analyze the meta-learning capability of {\model}, we compare it against previous approaches in this area on a standard evaluation setup. We consider two experimental setups inline with previous evaluation in the literature \cite{huang2020graph}; (i) with synthetic datasets to analyze performance on different problem setups without altering the graph topology, and (ii) with real-world datasets to analyze performance for practical application. Based on the problem setup, the datasets are partitioned into node-centric subgraphs with corresonding root node's label as the subgraph's ground truth label. The subgraphs are subsequently batched into tasks which are further divided into support set and query set for meta-learning. The evaluation metric for both the tasks of node classification and link prediction is accuracy $\mathcal{A}=|Y=\hat{Y}|/|Y|$. For robust comparison, the metrics are computed over five folds of validation splits in a 2-shot setting for node classification and 32-shot setting for link prediction. Table \ref{tab:main_results} presents the five-fold average and 95\% confidence interval of our experiments on synthetic and real-world datasets, respectively. From the results, we observe that {\model} consistently outperforms the baseline methods on a diverse set of datasets and meta-learning problem setups. For the disjoint labels setting, {\model} outperforms the best baseline 
in both the cases of single and multiple graphs. In the case of synthetic graphs, we observe that subgraph methods of {\model} and G-Meta outperform the entire graph encoding based approaches showing that subgraph methods are able to limit the over-smoothing problem \cite{chen2020measuring} and improve performance. Also, we observe that meta-learning methods (ProtoNet and MAML) are unreliable in their results producing good results for some tasks and worse for others, whereas {\model} is consistently better across the board. Hence, we conclude that using label prototypes to learn inductive biases and transferring them using MAML meta updates is a more robust technique. We note that {\model}, unlike previous HNN models, is able to handle graphs with edges and nodes in the order of millions, as evident by the performance on large real-world datasets including ogbn-arxiv, Tissue-PPI, Fold-PPI, and Tree-of-Life. Our experiments clearly demonstrate the significant performance of {\model} in a wide-range of applications and prove the effectiveness of meta-learning in HNN models.

\begin{table}[htbp]
    \centering
    \small
    \caption{Comparison with HNN models on standard benchmarks. We compare the Single Graph, Shared Labels (SG,SL) setup of the {\model} model to the baselines. The columns report the average multi-class classification accuracy and 95\% confidence interval over five-folds on the tasks of node classification (Node) and link prediction (Link) in the standard citation graphs.}
    \vspace{0.05in}
    \begin{tabular}{l|cc|cc|cc}
        \hline
         \textbf{Dataset}& \multicolumn{2}{c|}{\textbf{Cora}} & \multicolumn{2}{c|}{\textbf{Pubmed}}&\multicolumn{2}{c}{\textbf{Citeseer}}\\
         \textbf{Task}&\textbf{Node}&\textbf{Link}&\textbf{Node}&\textbf{Link}&\textbf{Node}&\textbf{Link}\\\hline
        \textbf{HMLP} & .754$\pm$.029&.765$\pm$.047&.657$\pm$.045&.848$\pm$.038&.879$\pm$.078&.877$\pm$.090\\
        \textbf{HAT} & .796$\pm$.036&\textbf{.792$\pm$.038}&.681$\pm$.034&.908$\pm$.038&.939$\pm$.034&.922$\pm$.036\\
        \textbf{HGCN} & .779$\pm$.026&.789$\pm$.030&.696$\pm$.029&\textbf{.914$\pm$.031}&\textbf{.950$\pm$.032}&.928$\pm$.030\\\hline
        \textbf{{\model}} & \textbf{.827$\pm$.037}&.790$\pm$.026&\textbf{.716$\pm$.029}&.896$\pm$.025&.924$\pm$.033&\textbf{.936$\pm$.030}\\
        \hline
    \end{tabular}
    \label{tab:hnn_compare}
\end{table}

\subsection{RQ2: Comparison with HNN models}
The current HNN formulations of HMLP, HAT, and HGCN do not scale to large datasets, and hence, we were not able to compare them against {\model} on large-scale datasets. However, it is necessary to compare the standard HNN formulations with {\model} to understand the importance of subgraph encoders and meta-learning. 
\begin{wrapfigure}{l}{.4\textwidth}
    \begin{center}
    \includegraphics[width=\linewidth]{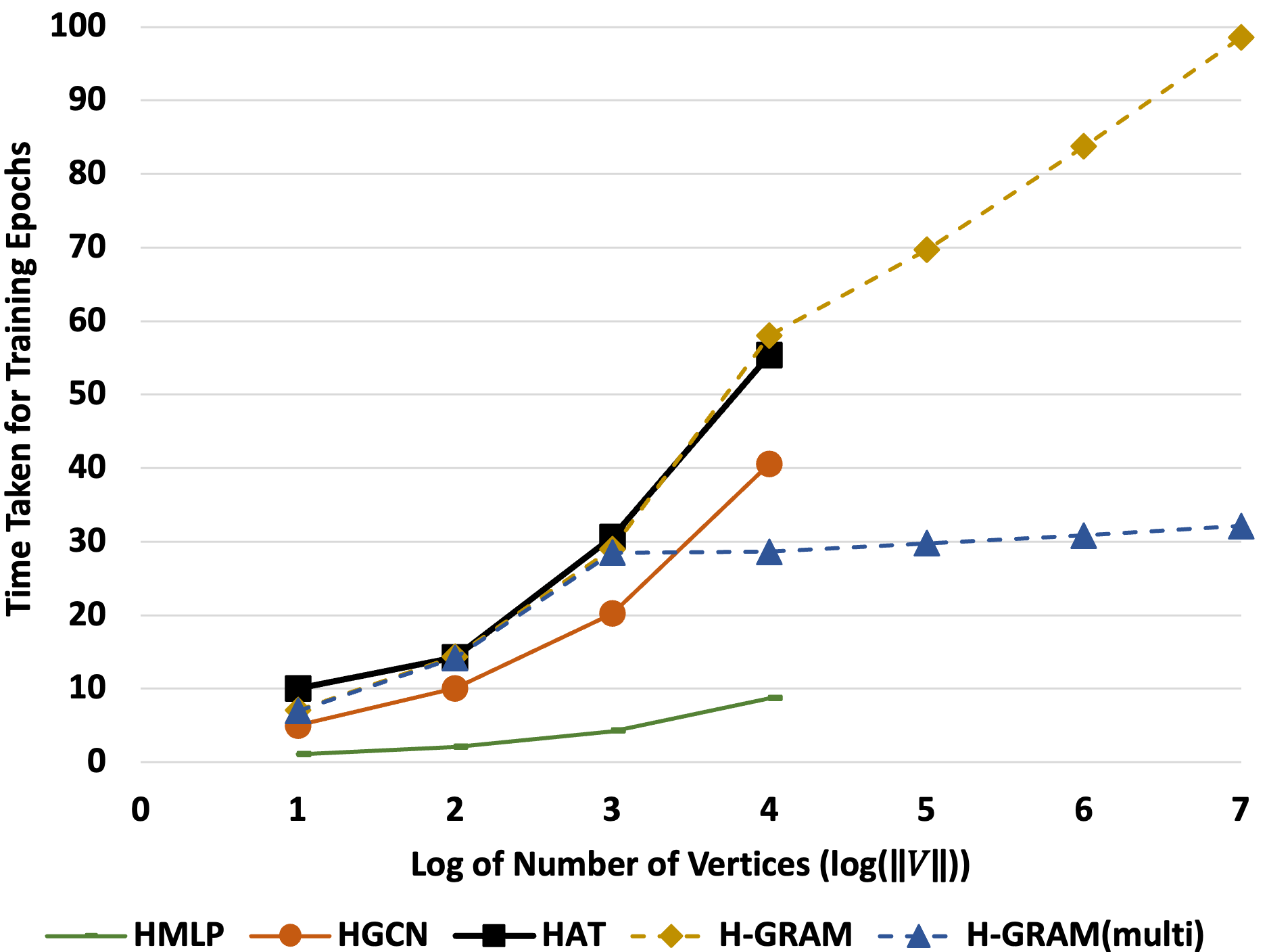}
    \end{center}
    \caption{Time taken (per epoch) by {\model} compared to other HNNs with varying number of nodes $|\mathcal{V}|=\{10^i\}_{i=1}^7$ in Syn. BA graph. {\model}(multi) is the multi-GPU version of {\model}.}
    \label{fig:hnn_time}
\end{wrapfigure}
Thus, we utilize the single graph and shared labels setup of {\model} to evaluate its performance on citation networks
of Cora, Pubmed, and Citeseer for both tasks of node classification and link prediction. We also compare the time taken by our model and other HNNs on varying number of nodes $\left(|\mathcal{V}|=\{10^i\}_{i=1}^7\right)$ in the Synthetic BA graph. For this experiment, we also consider a multi-GPU version of {\model} that parallelizes the HNN update computations and accumulates them for meta update. From our results, presented in Table \ref{tab:hnn_compare}, we observe that {\model} is, on average, able to outperform the best baseline.
 This shows that our formulation of HNN models using meta-learning over node-centric subgraphs is more effective than the traditional models, while also being scalable over large datasets. The scalability allows for multi-GPU training and translates the performance gains of HNNs to larger datasets. In the results provided in Figure \ref{fig:hnn_time}, we observe that the time taken by the models is inline with their parameter complexity (HMLP$\leq$HGCN$\leq$HAT$\leq${\model}). However, the traditional HNNs are not able to scale beyond $|\mathcal{V}|=10^4$, whereas, {\model} is able to accommodate large graphs. Another point to note is that {\model}(multi) is able to parallelize well over multiple GPUs with its time taken showing stability after $10^4$ nodes (which is the size of nodes that a single GPU can accommodate).
 \subsection{RQ3: Challenging Few-shot Settings}
To understand the effect of different few-shot learning scenarios, we vary the number of few-shots $M$ and hops $K$ in the neighborhood. For the experiment on few-shot classification, we consider the problems of node classification on Fold-PPI dataset and link prediction on FirstMM-DB dataset. We vary $M=1,2,3$ and $M=16,32,64$ for node classification and link prediction, respectively, and calculate the corresponding accuracy and 95\% confidence interval of {\model}. The results for this experiment are presented in Figures \ref{fig:nc_shots} and \ref{fig:lp_shots} for node classification and link prediction, respectively. To determine the effect of hops in the neighborhood, we vary $K=1,2,3$ for the same problem setting and compute the corresponding performance of our model. The results for varying neighborhood sizes are reported in Figure \ref{fig:acc_hops}.
In the results on varying the number of few-shots, we observe a linear trend in both the tasks of node classification and link prediction, i.e., a linear increase in {\model}'s accuracy with an increase in the number of shots in meta-testing. Thus, we conclude that {\model}, like other generic learning models, performs better with an increasing number of training samples. In the experiment on increasing the neighborhood size, we observe that in the task of node classification, $K=3$ shows the best performance, but in link prediction $K=2$ has the best performance, with a significant drop in $K=3$. Thus, for stability, we choose $K=2$ in our experiments. The trend in link prediction also shows that larger neighborhoods can lead to an increase in noise, which can negatively affect performance.

\begin{figure}[htbp]
\centering
\begin{subfigure}[b]{.33\textwidth}
  \centering
  \includegraphics[width=.92\linewidth]{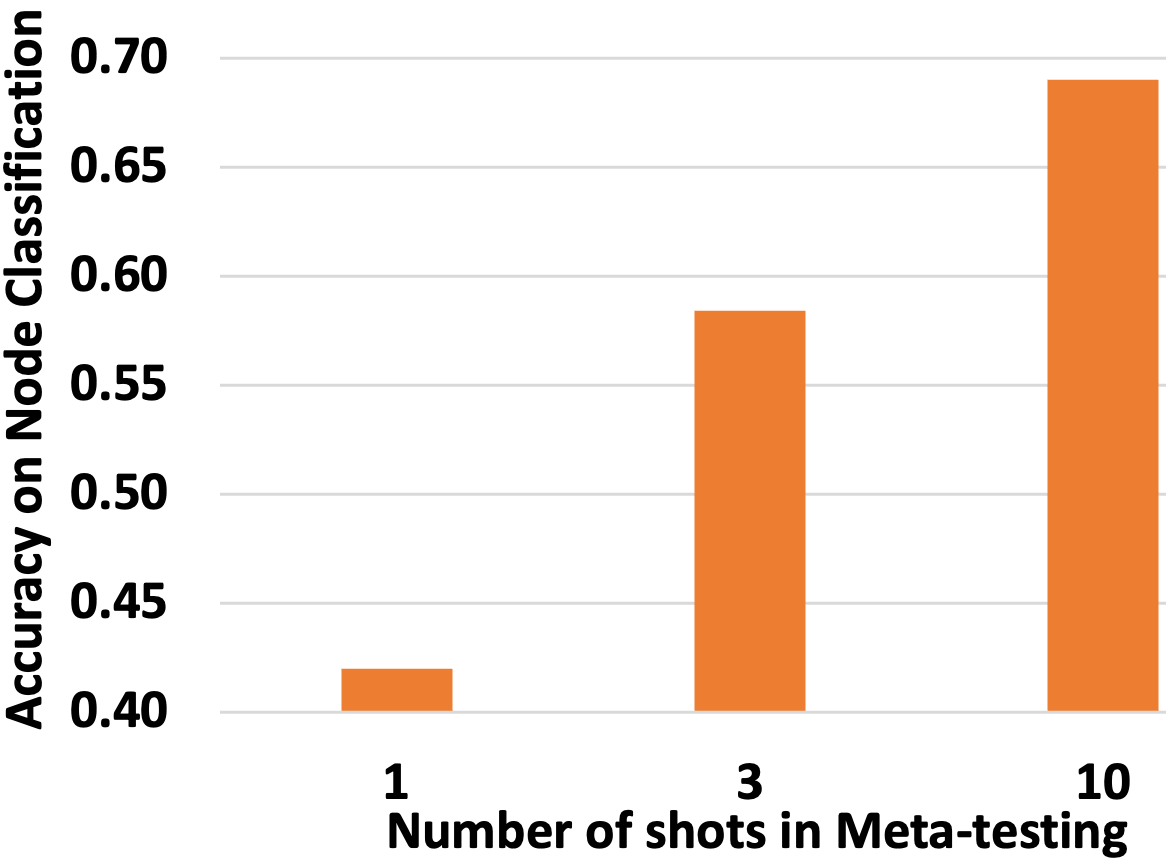}
  \caption{Number of Shots (vs) Accuracy for node classification on Fold-PPI dataset.}
  \label{fig:nc_shots}
\end{subfigure}\hfill
\begin{subfigure}[b]{.33\textwidth}
  \centering
  \includegraphics[width=.92\linewidth]{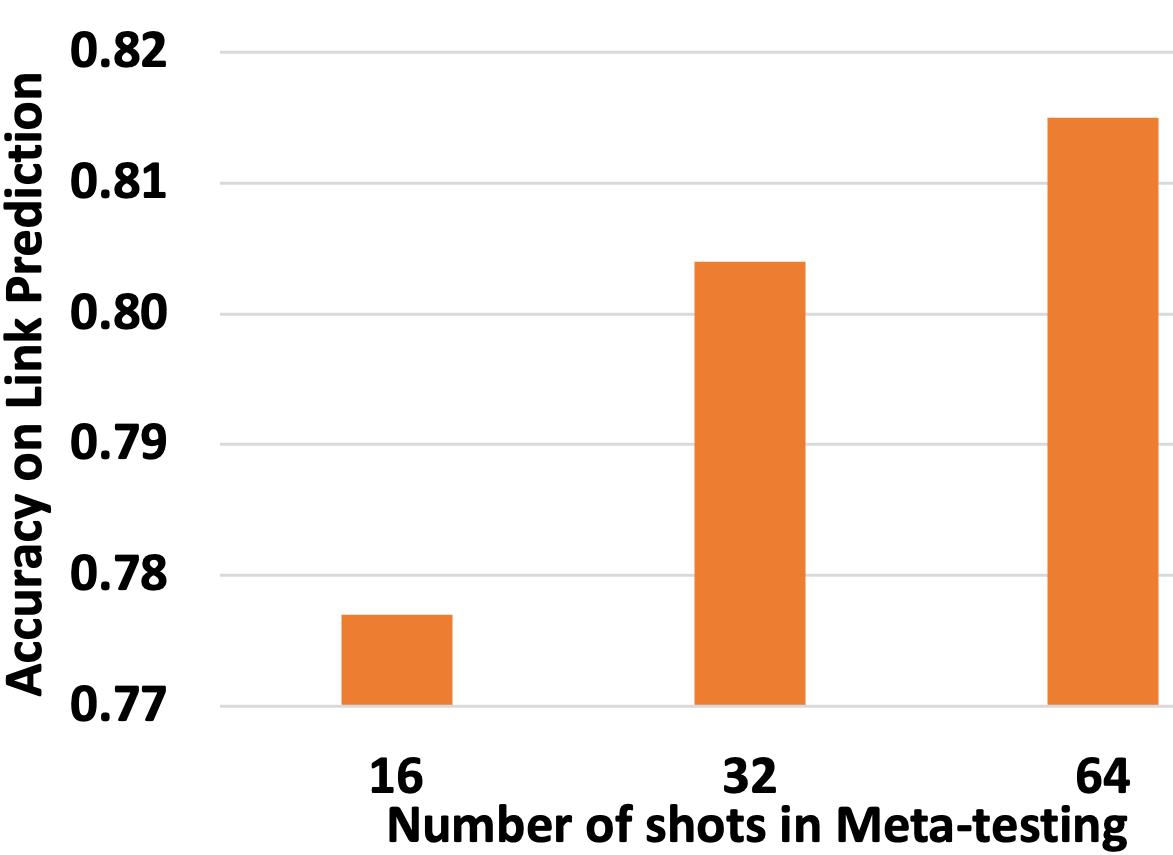}
  \caption{Number of Shots (vs) Accuracy for link prediction on FirstMM-DB dataset.}
  \label{fig:lp_shots}
\end{subfigure}\hfill
\begin{subfigure}[b]{.33\textwidth}
  \centering
  \includegraphics[width=.92\linewidth]{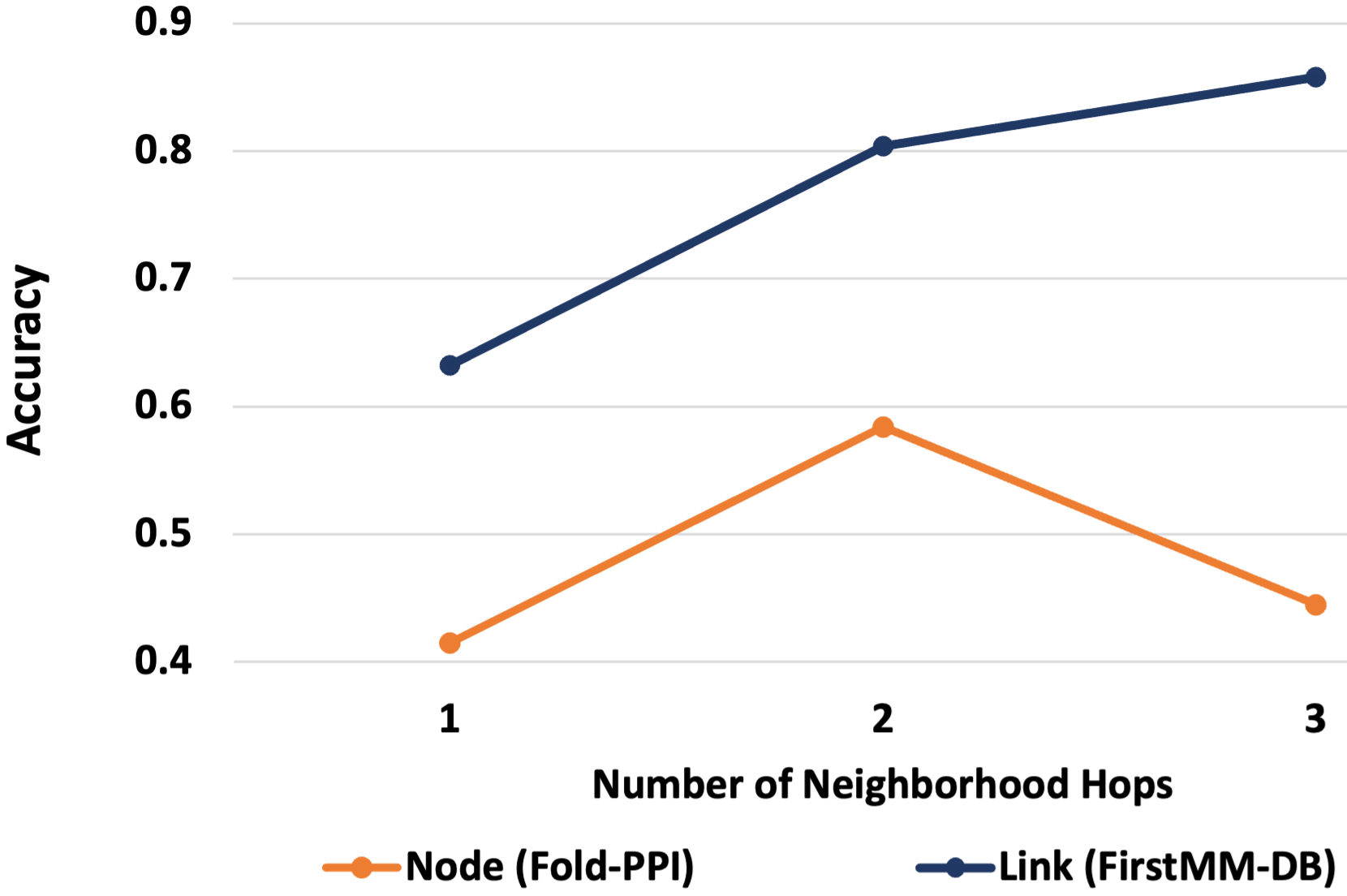}
  \caption{Number of hops (vs) Accuracy on the task of node classification and link prediction.}
  \label{fig:acc_hops}
\end{subfigure}
\caption{Performance of {\model} on challenging few-shot settings. The reported accuracies are multi-class classification accuracy averaged over five-fold runs of our model.}
\label{fig:few-shot}
\end{figure}

\subsection{RQ4: Ablation Study}
In this section, we aim to understand the contribution of the different components in our model. To this end, we compare variants of our model by (i) varying the base HNN model (HMLP, HAT, and HGCN), and (ii) deleting individual meta-learning components - H-ProtoNet implies {\model} without meta updates and H-MAML implies {\model} without prototypes. The model variants are compared on the real-world datasets and the results are presented in Table \ref{tab:ablation}. The ablation study indicates that meta gradient updates and label prototypes contribute to $\approx$16\% and $\approx$6\% improvement in {\model}'s performance, respectively. This clearly demonstrates the ability of label prototypes in encoding inductive biases and that of meta gradients in transferring the knowledge from meta-training to the meta-testing phase. Additionally, from our study on different HNN bases for {\model}, we note that the HGCN base outperforms the other bases of HMLP and HAT by $\approx$19\% and $\approx$2\%, respectively. Thus, we choose HGCN as the base in our final model.
\begin{table}[htbp]
    \centering
    \small
    \caption{Ablation study results - H-ProtoNet and H-MAML can be considered {\model}'s model variants without meta updates and label prototypes, respectively. {\model}(HMLP) and {\model}(HAT) represents the variant of {\model} with HMLP and HAT as base, respectively. Our final model, presented in the last row, uses HGCN as the base model. The columns report the average multi-class classification accuracy and 95\% confidence interval over five-folds on different tasks.}
    \vspace{0.05in}
    \begin{tabular}{l|ccc|cc}
    \hline
\textbf{Task}   & \multicolumn{3}{c|}{\textbf{Node Classification}}& \multicolumn{2}{c}{\textbf{Link Prediction}}\\
\textbf{Setup} & \boldmath$\langle SG,DL\rangle$ & \boldmath$\langle MG,SL\rangle$ & \boldmath$\langle MG,DL\rangle$ & \boldmath$\langle MG,SL\rangle$ & \boldmath$\langle MG,SL\rangle$ \\
\textbf{Dataset}& \textbf{ogbn-arxiv}  & \textbf{Tissue-PPI}  & \textbf{Fold-PPI}& \textbf{FirstMM-DB}  & \textbf{Tree-of-Life}\\\hline
\textbf{H-ProtoNet}  &.389$\pm$.019&.559$\pm$.027&.398$\pm$.023&.799$\pm$.015&.716$\pm$.004\\
\textbf{H-MAML} &.407$\pm$.023&.762$\pm$.056&.502$\pm$.046&.777$\pm$.018&.739$\pm$.005\\
\textbf{{\model}(HMLP)}  &.370$\pm$.036&.537$\pm$.044&.372$\pm$.036&.772$\pm$.028&.688$\pm$.019\\
\textbf{{\model}(HAT)}   &.462$\pm$.032&.777$\pm$.028&.573$\pm$.048&.794$\pm$.023&.732$\pm$.021\\\hline
\textbf{{\model}(ours)} &\textbf{.472$\pm$.035}&\textbf{.786$\pm$.031}&\textbf{.584$\pm$.044}&\textbf{.804$\pm$.021}&\textbf{.742$\pm$.013}\\\hline
\end{tabular}
\label{tab:ablation}
\end{table}
\section{Conclusion}
\label{sec:conc}
In this paper, we introduce {\model}, a scalable hyperbolic meta-learning model that is able to learn inductive biases from a support set and adapt to a query set with disjoint nodes, edges, and labels by transferring the knowledge. We have theoretically proven the effectiveness of node-centric subgraph information in HNN models, and used that to formulate a meta-learning model that can scale over large datasets. Our model is able to handle challenging few-shot learning scenarios and also outperform the previous Euclidean baselines in the area of meta-learning. Additionally, unlike previous HNN models, {\model} is also able to scale to large graph datasets.
\bibliographystyle{plainnat}
\bibliography{main}

\begin{thebibliography}{44}
\providecommand{\natexlab}[1]{#1}
\providecommand{\url}[1]{\texttt{#1}}
\expandafter\ifx\csname urlstyle\endcsname\relax
  \providecommand{\doi}[1]{doi: #1}\else
  \providecommand{\doi}{doi: \begingroup \urlstyle{rm}\Url}\fi

\bibitem[Bala\v{z}evi\'{c} et~al.(2019)Bala\v{z}evi\'{c}, Allen, and
  Hospedales]{balazevic2019multi}
Ivana Bala\v{z}evi\'{c}, Carl Allen, and Timothy Hospedales.
\newblock Multi-relational poincar\'{e} graph embeddings.
\newblock In \emph{Proceedings of the 33rd International Conference on Neural
  Information Processing Systems}, Red Hook, NY, USA, 2019. Curran Associates
  Inc.

\bibitem[Bonnabel(2013)]{bonnabel2013stochastic}
Silvère Bonnabel.
\newblock Stochastic gradient descent on riemannian manifolds.
\newblock \emph{IEEE Transactions on Automatic Control}, 58\penalty0
  (9):\penalty0 2217--2229, 2013.
\newblock \doi{10.1109/TAC.2013.2254619}.

\bibitem[Bose et~al.(2020)Bose, Jain, Molino, and Hamilton]{bose2020metagraph}
Avishek~Joey Bose, Ankit Jain, Piero Molino, and William~L. Hamilton.
\newblock Meta-graph: Few shot link prediction via meta learning, 2020.
\newblock URL \url{https://openreview.net/forum?id=BJepcaEtwB}.

\bibitem[Chami et~al.(2019)Chami, Ying, R{\'e}, and
  Leskovec]{chami2019hyperbolic}
Ines Chami, Zhitao Ying, Christopher R{\'e}, and Jure Leskovec.
\newblock Hyperbolic graph convolutional neural networks.
\newblock \emph{Advances in neural information processing systems}, 32, 2019.

\bibitem[Chami et~al.(2020)Chami, Wolf, Juan, Sala, Ravi, and
  R{\'e}]{chami2020low}
Ines Chami, Adva Wolf, Da-Cheng Juan, Frederic Sala, Sujith Ravi, and
  Christopher R{\'e}.
\newblock Low-dimensional hyperbolic knowledge graph embeddings.
\newblock In \emph{Proceedings of the 58th Annual Meeting of the Association
  for Computational Linguistics}, pages 6901--6914, Online, July 2020.
  Association for Computational Linguistics.
\newblock \doi{10.18653/v1/2020.acl-main.617}.
\newblock URL \url{https://aclanthology.org/2020.acl-main.617}.

\bibitem[Chen et~al.(2020)Chen, Lin, Li, Li, Zhou, and Sun]{chen2020measuring}
Deli Chen, Yankai Lin, Wei Li, Peng Li, Jie Zhou, and Xu~Sun.
\newblock Measuring and relieving the over-smoothing problem for graph neural
  networks from the topological view.
\newblock \emph{Proceedings of the AAAI Conference on Artificial Intelligence},
  34\penalty0 (04):\penalty0 3438--3445, Apr. 2020.
\newblock \doi{10.1609/aaai.v34i04.5747}.
\newblock URL \url{https://ojs.aaai.org/index.php/AAAI/article/view/5747}.

\bibitem[Chen et~al.(2019)Chen, Zhang, Zhang, Chen, and Chen]{chen2019meta}
Mingyang Chen, Wen Zhang, Wei Zhang, Qiang Chen, and Huajun Chen.
\newblock Meta relational learning for few-shot link prediction in knowledge
  graphs.
\newblock In \emph{Proceedings of the 2019 Conference on Empirical Methods in
  Natural Language Processing and the 9th International Joint Conference on
  Natural Language Processing (EMNLP-IJCNLP)}, pages 4208--4217, Hong Kong,
  China, November 2019. Association for Computational Linguistics.
\newblock URL \url{https://www.aclweb.org/anthology/D19-1431}.

\bibitem[Choudhary et~al.(2021)Choudhary, Rao, Katariya, Subbian, and
  Reddy]{choudhary2021self}
Nurendra Choudhary, Nikhil Rao, Sumeet Katariya, Karthik Subbian, and
  Chandan~K. Reddy.
\newblock Self-supervised hyperboloid representations from logical queries over
  knowledge graphs.
\newblock In \emph{Proceedings of the Web Conference 2021}, WWW '21, page
  1373–1384, New York, NY, USA, 2021. Association for Computing Machinery.
\newblock ISBN 9781450383127.
\newblock \doi{10.1145/3442381.3449974}.
\newblock URL \url{https://doi.org/10.1145/3442381.3449974}.

\bibitem[Choudhary et~al.(2022{\natexlab{a}})Choudhary, Rao, Katariya, Subbian,
  and Reddy]{choudhary2022anthem}
Nurendra Choudhary, Nikhil Rao, Sumeet Katariya, Karthik Subbian, and
  Chandan~K. Reddy.
\newblock Anthem: Attentive hyperbolic entity model for product search.
\newblock In \emph{Proceedings of the Fifteenth ACM International Conference on
  Web Search and Data Mining}, WSDM '22, page 161–171, New York, NY, USA,
  2022{\natexlab{a}}. Association for Computing Machinery.
\newblock ISBN 9781450391320.
\newblock \doi{10.1145/3488560.3498456}.
\newblock URL \url{https://doi.org/10.1145/3488560.3498456}.

\bibitem[Choudhary et~al.(2022{\natexlab{b}})Choudhary, Rao, Subbian,
  Sengamedu, and Reddy]{choudhary2022hyperbolic}
Nurendra Choudhary, Nikhil Rao, Karthik Subbian, Srinivasan~H. Sengamedu, and
  Chandan~K. Reddy.
\newblock Hyperbolic neural networks: Theory, architectures and applications.
\newblock In \emph{Proceedings of the 28th ACM SIGKDD Conference on Knowledge
  Discovery and Data Mining}, KDD '22, page 4778–4779, New York, NY, USA,
  2022{\natexlab{b}}. Association for Computing Machinery.
\newblock ISBN 9781450393850.
\newblock \doi{10.1145/3534678.3542613}.
\newblock URL \url{https://doi.org/10.1145/3534678.3542613}.

\bibitem[Dhingra et~al.(2018)Dhingra, Shallue, Norouzi, Dai, and
  Dahl]{dhingra2018embedding}
Bhuwan Dhingra, Christopher Shallue, Mohammad Norouzi, Andrew Dai, and George
  Dahl.
\newblock Embedding text in hyperbolic spaces.
\newblock In \emph{Proceedings of the Twelfth Workshop on Graph-Based Methods
  for Natural Language Processing ({T}ext{G}raphs-12)}, pages 59--69, New
  Orleans, Louisiana, USA, June 2018. Association for Computational
  Linguistics.
\newblock \doi{10.18653/v1/W18-1708}.
\newblock URL \url{https://aclanthology.org/W18-1708}.

\bibitem[Finn et~al.(2017)Finn, Abbeel, and Levine]{finn2017model}
Chelsea Finn, Pieter Abbeel, and Sergey Levine.
\newblock Model-agnostic meta-learning for fast adaptation of deep networks.
\newblock In \emph{Proceedings of the 34th International Conference on Machine
  Learning - Volume 70}, ICML'17, page 1126–1135. JMLR.org, 2017.

\bibitem[Ganea et~al.(2018)Ganea, B{\'e}cigneul, and
  Hofmann]{ganea2018hyperbolic}
Octavian Ganea, Gary B{\'e}cigneul, and Thomas Hofmann.
\newblock Hyperbolic neural networks.
\newblock \emph{Advances in neural information processing systems}, 31, 2018.

\bibitem[Gao et~al.(2022)Gao, Ziesche, Vien, Volpp, and
  Neumann]{gao2022matters}
Ning Gao, Hanna Ziesche, Ngo~Anh Vien, Michael Volpp, and Gerhard Neumann.
\newblock What matters for meta-learning vision regression tasks?
\newblock In \emph{2022 IEEE/CVF Conference on Computer Vision and Pattern
  Recognition (CVPR)}, pages 14756--14766, 2022.
\newblock \doi{10.1109/CVPR52688.2022.01436}.

\bibitem[Gulcehre et~al.(2019)Gulcehre, Denil, Malinowski, Razavi, Pascanu,
  Hermann, Battaglia, Bapst, Raposo, Santoro, and
  de~Freitas]{gulcehre2018hyperbolic}
Caglar Gulcehre, Misha Denil, Mateusz Malinowski, Ali Razavi, Razvan Pascanu,
  Karl~Moritz Hermann, Peter Battaglia, Victor Bapst, David Raposo, Adam
  Santoro, and Nando de~Freitas.
\newblock Hyperbolic attention networks.
\newblock In \emph{International Conference on Learning Representations}, 2019.
\newblock URL \url{https://openreview.net/forum?id=rJxHsjRqFQ}.

\bibitem[Hamilton et~al.(2017)Hamilton, Ying, and
  Leskovec]{hamilton2017inductive}
Will Hamilton, Zhitao Ying, and Jure Leskovec.
\newblock Inductive representation learning on large graphs.
\newblock In I.~Guyon, U.~Von Luxburg, S.~Bengio, H.~Wallach, R.~Fergus,
  S.~Vishwanathan, and R.~Garnett, editors, \emph{Advances in Neural
  Information Processing Systems}, volume~30. Curran Associates, Inc., 2017.
\newblock URL
  \url{https://proceedings.neurips.cc/paper/2017/file/5dd9db5e033da9c6fb5ba83c7a7ebea9-Paper.pdf}.

\bibitem[Hu et~al.(2020)Hu, Fey, Zitnik, Dong, Ren, Liu, Catasta, and
  Leskovec]{hu2020ogb}
Weihua Hu, Matthias Fey, Marinka Zitnik, Yuxiao Dong, Hongyu Ren, Bowen Liu,
  Michele Catasta, and Jure Leskovec.
\newblock Open graph benchmark: Datasets for machine learning on graphs.
\newblock \emph{arXiv preprint arXiv:2005.00687}, 2020.

\bibitem[Huang and Zitnik(2020)]{huang2020graph}
Kexin Huang and Marinka Zitnik.
\newblock Graph meta learning via local subgraphs.
\newblock \emph{Advances in Neural Information Processing Systems},
  33:\penalty0 5862--5874, 2020.

\bibitem[Khatir et~al.(2023)Khatir, Choudhary, Choudhury, Agarwal, and
  Reddy]{khatir2023a}
Mehrdad Khatir, Nurendra Choudhary, Sutanay Choudhury, Khushbu Agarwal, and
  Chandan~K. Reddy.
\newblock A unification framework for euclidean and hyperbolic graph neural
  networks.
\newblock In \emph{Proceedings of the Thirty-Second International Joint
  Conference on Artificial Intelligence, {IJCAI} 2023, 19th-25th August 2023,
  Macao, SAR, China}, pages 3875--3883. ijcai.org, 2023.
\newblock \doi{10.24963/ijcai.2023/431}.
\newblock URL \url{https://doi.org/10.24963/ijcai.2023/431}.

\bibitem[Kipf and Welling(2016)]{kipf2016variational}
Thomas~N Kipf and Max Welling.
\newblock Variational graph auto-encoders.
\newblock \emph{NIPS Workshop on Bayesian Deep Learning}, 2016.

\bibitem[Kochurov et~al.(2020)Kochurov, Karimov, and
  Kozlukov]{kochurov2020geoopt}
Max Kochurov, Rasul Karimov, and Serge Kozlukov.
\newblock Geoopt: Riemannian optimization in pytorch.
\newblock \emph{arXiv preprint arXiv:2005.02819}, 2020.

\bibitem[Lee et~al.(2022)Lee, Li, and Vu]{lee2022meta}
Hung-yi Lee, Shang-Wen Li, and Thang Vu.
\newblock Meta learning for natural language processing: A survey.
\newblock In \emph{Proceedings of the 2022 Conference of the North American
  Chapter of the Association for Computational Linguistics: Human Language
  Technologies}, pages 666--684, Seattle, United States, July 2022. Association
  for Computational Linguistics.
\newblock \doi{10.18653/v1/2022.naacl-main.49}.
\newblock URL \url{https://aclanthology.org/2022.naacl-main.49}.

\bibitem[Liu et~al.(2019)Liu, Zhou, Long, Jiang, and Zhang]{liu2019learning}
Lu~Liu, Tianyi Zhou, Guodong Long, Jing Jiang, and Chengqi Zhang.
\newblock \emph{Learning to Propagate for Graph Meta-Learning}.
\newblock Curran Associates Inc., Red Hook, NY, USA, 2019.

\bibitem[Namata et~al.(2012)Namata, London, Getoor, and
  Huang]{galileo2012query}
Galileo~Mark Namata, Ben London, Lise Getoor, and Bert Huang.
\newblock Query-driven active surveying for collective classification.
\newblock In \emph{International Workshop on Mining and Learning with Graphs},
  Edinburgh, Scotland, 2012.

\bibitem[Neumann et~al.(2013)Neumann, Moreno, Antanas, Garnett, and
  Kersting]{neumann2013graph}
Marion Neumann, Plinio Moreno, Laura Antanas, Roman Garnett, and Kristian
  Kersting.
\newblock Graph kernels for object category prediction in task-dependent robot
  grasping.
\newblock In \emph{Online proceedings of the eleventh workshop on mining and
  learning with graphs}, pages 0--6, 2013.

\bibitem[Paszke et~al.(2019)Paszke, Gross, Massa, Lerer, Bradbury, Chanan,
  Killeen, Lin, Gimelshein, Antiga, Desmaison, Kopf, Yang, DeVito, Raison,
  Tejani, Chilamkurthy, Steiner, Fang, Bai, and Chintala]{paszke2019pytorch}
Adam Paszke, Sam Gross, Francisco Massa, Adam Lerer, James Bradbury, Gregory
  Chanan, Trevor Killeen, Zeming Lin, Natalia Gimelshein, Luca Antiga, Alban
  Desmaison, Andreas Kopf, Edward Yang, Zachary DeVito, Martin Raison, Alykhan
  Tejani, Sasank Chilamkurthy, Benoit Steiner, Lu~Fang, Junjie Bai, and Soumith
  Chintala.
\newblock Pytorch: An imperative style, high-performance deep learning library.
\newblock In H.~Wallach, H.~Larochelle, A.~Beygelzimer, F.~d\textquotesingle
  Alch\'{e}-Buc, E.~Fox, and R.~Garnett, editors, \emph{Advances in Neural
  Information Processing Systems 32}, pages 8024--8035. Curran Associates,
  Inc., 2019.
\newblock URL
  \url{http://papers.neurips.cc/paper/9015-pytorch-an-imperative-style-high-performance-deep-learning-library.pdf}.

\bibitem[Pr{\v{z}}ulj(2007)]{prvzulj2007biological}
Nata{\v{s}}a Pr{\v{z}}ulj.
\newblock Biological network comparison using graphlet degree distribution.
\newblock \emph{Bioinformatics}, 23\penalty0 (2):\penalty0 e177--e183, 2007.

\bibitem[Reddi et~al.(2018)Reddi, Kale, and Kumar]{reddi2018on}
Sashank~J. Reddi, Satyen Kale, and Sanjiv Kumar.
\newblock On the convergence of adam and beyond.
\newblock In \emph{International Conference on Learning Representations}, 2018.
\newblock URL \url{https://openreview.net/forum?id=ryQu7f-RZ}.

\bibitem[Rossi and Ahmed(2015)]{rossi2015the}
Ryan~A. Rossi and Nesreen~K. Ahmed.
\newblock The network data repository with interactive graph analytics and
  visualization.
\newblock In \emph{Proceedings of the Twenty-Ninth AAAI Conference on
  Artificial Intelligence}, AAAI'15, page 4292–4293. AAAI Press, 2015.
\newblock ISBN 0262511290.

\bibitem[Shen et~al.(2007)Shen, Zhang, Luo, Zhu, Yu, Chen, Li, and
  Jiang]{shen2007predicting}
Juwen Shen, Jian Zhang, Xiaomin Luo, Weiliang Zhu, Kunqian Yu, Kaixian Chen,
  Yixue Li, and Hualiang Jiang.
\newblock Predicting protein--protein interactions based only on sequences
  information.
\newblock \emph{Proceedings of the National Academy of Sciences}, 104\penalty0
  (11):\penalty0 4337--4341, 2007.

\bibitem[Snell et~al.(2017)Snell, Swersky, and Zemel]{snell2017prototypical}
Jake Snell, Kevin Swersky, and Richard Zemel.
\newblock Prototypical networks for few-shot learning.
\newblock In I.~Guyon, U.~Von Luxburg, S.~Bengio, H.~Wallach, R.~Fergus,
  S.~Vishwanathan, and R.~Garnett, editors, \emph{Advances in Neural
  Information Processing Systems}, volume~30. Curran Associates, Inc., 2017.
\newblock URL
  \url{https://proceedings.neurips.cc/paper/2017/file/cb8da6767461f2812ae4290eac7cbc42-Paper.pdf}.

\bibitem[Sun et~al.(2021)Sun, Cheng, Zuberi, Perez, and Volkovs]{sun2021hgcf}
Jianing Sun, Zhaoyue Cheng, Saba Zuberi, Felipe Perez, and Maksims Volkovs.
\newblock Hgcf: Hyperbolic graph convolution networks for collaborative
  filtering.
\newblock In \emph{Proceedings of the Web Conference 2021}, WWW '21, page
  593–601, New York, NY, USA, 2021. Association for Computing Machinery.
\newblock ISBN 9781450383127.
\newblock \doi{10.1145/3442381.3450101}.
\newblock URL \url{https://doi.org/10.1145/3442381.3450101}.

\bibitem[Ungar(2005)]{ungar2005analytic}
Abraham~A Ungar.
\newblock \emph{Analytic hyperbolic geometry: Mathematical foundations and
  applications}.
\newblock World Scientific, 2005.

\bibitem[Vilalta and Drissi(2002)]{vilalta2002perspective}
Ricardo Vilalta and Youssef Drissi.
\newblock A perspective view and survey of meta-learning.
\newblock \emph{Artificial intelligence review}, 18\penalty0 (2):\penalty0
  77--95, 2002.

\bibitem[Vyas et~al.(2022)Vyas, Choudhary, Khatir, and Reddy]{vyas2022graphzoo}
Anoushka Vyas, Nurendra Choudhary, Mehrdad Khatir, and Chandan~K. Reddy.
\newblock Graphzoo: A development toolkit for graph neural networks with
  hyperbolic geometries.
\newblock In \emph{Companion Proceedings of the Web Conference 2022}, WWW '22,
  page 184–188, New York, NY, USA, 2022. Association for Computing Machinery.
\newblock ISBN 9781450391306.
\newblock \doi{10.1145/3487553.3524241}.
\newblock URL \url{https://doi.org/10.1145/3487553.3524241}.

\bibitem[Wu et~al.(2019)Wu, Souza, Zhang, Fifty, Yu, and
  Weinberger]{wu2019simplifying}
Felix Wu, Amauri Souza, Tianyi Zhang, Christopher Fifty, Tao Yu, and Kilian
  Weinberger.
\newblock Simplifying graph convolutional networks.
\newblock In \emph{Proceedings of the 36th International Conference on Machine
  Learning}, pages 6861--6871. PMLR, 2019.

\bibitem[Xu et~al.(2019)Xu, Hu, Leskovec, and Jegelka]{xu2018how}
Keyulu Xu, Weihua Hu, Jure Leskovec, and Stefanie Jegelka.
\newblock How powerful are graph neural networks?
\newblock In \emph{International Conference on Learning Representations}, 2019.
\newblock URL \url{https://openreview.net/forum?id=ryGs6iA5Km}.

\bibitem[Zeng et~al.(2020)Zeng, Zhou, Srivastava, Kannan, and
  Prasanna]{graphsaint-iclr20}
Hanqing Zeng, Hongkuan Zhou, Ajitesh Srivastava, Rajgopal Kannan, and Viktor
  Prasanna.
\newblock {GraphSAINT}: Graph sampling based inductive learning method.
\newblock In \emph{International Conference on Learning Representations}, 2020.
\newblock URL \url{https://openreview.net/forum?id=BJe8pkHFwS}.

\bibitem[Zhang et~al.(2020)Zhang, Yao, Huang, Jiang, Li, and
  Chawla]{zhang2020few}
Chuxu Zhang, Huaxiu Yao, Chao Huang, Meng Jiang, Zhenhui Li, and Nitesh~V
  Chawla.
\newblock Few-shot knowledge graph completion.
\newblock In \emph{Proceedings of the AAAI Conference on Artificial
  Intelligence}, volume~34, pages 3041--3048, 2020.

\bibitem[Zhou et~al.(2019)Zhou, Cao, Zhang, Trajcevski, Zhong, and
  Geng]{zhou2019meta}
Fan Zhou, Chengtai Cao, Kunpeng Zhang, Goce Trajcevski, Ting Zhong, and
  Ji~Geng.
\newblock Meta-gnn: On few-shot node classification in graph meta-learning.
\newblock In \emph{Proceedings of the 28th ACM International Conference on
  Information and Knowledge Management}, CIKM '19, page 2357–2360, New York,
  NY, USA, 2019. Association for Computing Machinery.
\newblock ISBN 9781450369763.
\newblock \doi{10.1145/3357384.3358106}.
\newblock URL \url{https://doi.org/10.1145/3357384.3358106}.

\bibitem[Zhou et~al.(2020{\natexlab{a}})Zhou, Cao, Trajcevski, Zhang, Zhong,
  and Geng]{zhou2020fast}
Fan Zhou, Chengtai Cao, Goce Trajcevski, Kunpeng Zhang, Ting Zhong, and
  Ji~Geng.
\newblock Fast network alignment via graph meta-learning.
\newblock In \emph{IEEE INFOCOM 2020 - IEEE Conference on Computer
  Communications}, pages 686--695, 2020{\natexlab{a}}.
\newblock \doi{10.1109/INFOCOM41043.2020.9155456}.

\bibitem[Zhou et~al.(2020{\natexlab{b}})Zhou, Cui, Hu, Zhang, Yang, Liu, Wang,
  Li, and Sun]{zhou2020graph}
Jie Zhou, Ganqu Cui, Shengding Hu, Zhengyan Zhang, Cheng Yang, Zhiyuan Liu,
  Lifeng Wang, Changcheng Li, and Maosong Sun.
\newblock Graph neural networks: A review of methods and applications.
\newblock \emph{AI Open}, 1:\penalty0 57--81, 2020{\natexlab{b}}.
\newblock ISSN 2666-6510.
\newblock \doi{https://doi.org/10.1016/j.aiopen.2021.01.001}.
\newblock URL
  \url{https://www.sciencedirect.com/science/article/pii/S2666651021000012}.

\bibitem[Zitnik and Leskovec(2017)]{zitnik2017predicting}
Marinka Zitnik and Jure Leskovec.
\newblock Predicting multicellular function through multi-layer tissue
  networks.
\newblock \emph{Bioinformatics}, 33\penalty0 (14):\penalty0 i190--i198, 2017.

\bibitem[Zitnik et~al.(2019)Zitnik, Sosi{\v c}, Feldman, and
  Leskovec]{zitnik2019evolution}
Marinka Zitnik, Rok Sosi{\v c}, Marcus~W. Feldman, and Jure Leskovec.
\newblock Evolution of resilience in protein interactomes across the tree of
  life.
\newblock \emph{Proceedings of the National Academy of Sciences}, 116\penalty0
  (10):\penalty0 4426--4433, 2019.
\newblock \doi{10.1073/pnas.1818013116}.
\newblock URL \url{https://www.pnas.org/doi/abs/10.1073/pnas.1818013116}.

\end{thebibliography}
\newpage
\appendix
\section*{Appendix}

\section{Preliminaries}
\label{app:preliminaries}
In this section, we discuss the hyperbolic operations used in HNN formulations and set up the meta-learning problem. 

\noindent \textbf{Hyperbolic operations:} The hyperbolic gyrovector operations required for training neural networks, in  a Poincaré ball with curvature $c$, are defined by Möbius addition $(\oplus_c)$, Möbius subtraction $(\ominus_c)$, exponential map $(\exp_x^c)$, logarithmic map $(\log_x^c)$, Möbius scalar product $(\odot_c)$, and Möbius matrix-vector product $(\otimes_c)$.
\begin{align}
g_x^\mathbb{H} &\coloneqq \lambda_x^2 ~\ g^{\mathbb{E}} \quad \text{where }\lambda_x\coloneqq\frac{2}{1-\| x \|^2}\\
x \oplus_c y &\coloneqq \frac{\left(1+2c\langle x,y\rangle +c\| y\| ^2\right)x+\left(1-c\| x\| ^2\right)y}{1+2c\langle x,y\rangle +c^2\| x\| ^2\| y\| ^2}\\
x \ominus_c y &\coloneqq x \oplus_c -y	\\
\exp_x^c(v) &\coloneqq x \oplus_c \left(\tanh\left(\sqrt{c}\frac{\lambda_x^c\| v\| }{2}\right)\frac{v}{\sqrt{c}\| v\| }\right)\\
\log_x^c(y) &\coloneqq \frac{2}{\sqrt{c}\lambda_x^c}\tanh^{-1}\left(\sqrt{c}\| -x\oplus_cy\| \right)\frac{-x\oplus_cy}{\| -x\oplus_cy\| }\\
r \odot_c x &\coloneqq \exp_0^c(rlog_0^c(x)), ~\forall r \in \mathbb{R}, x \in \mathbb{H}^d\\
M \otimes_c x &\coloneqq \frac{1}{\sqrt{c}}\tanh\left(\frac{\| Mx\| }{\| x\| }tanh^{-1}\left(\sqrt{c}\| x\| \right)\right)\frac{Mx}{\| Mx\| } 
\end{align}
Here, $\coloneqq$ denotes assignment operation for Möbius operations. The operands $x,y \in \mathbb{H}^d$ are hyperbolic gyrovectors. $g^\mathbb{E}=\textbf{I}_n$ is the Euclidean identity metric tensor and $\| x \|$ is the Euclidean norm of $x$. $\lambda_x$ is the conformal factor between the Euclidean and hyperbolic metric tensor \cite{ganea2018hyperbolic}.

\noindent \textbf{Meta-Learning setup:} In meta-learning, the dataset consists of multiple tasks $\mathcal{T}_{i} \in \mathcal{D}$. The dataset is divided into training ($\mathcal{D}_{train}$), test ($\mathcal{D}_{test}$), and validation ($\mathcal{D}_{val}$) sets, and each task $\mathcal{T}_i$ is divided into a support set ($\mathcal{T}_i^{s}$) and query set ($\mathcal{T}_i^{q}$) with corresponding labels $Y_s$ and $Y_q$, respectively. The size of query label set is given by $N=|Y_q|$ 
and the size of support set in meta-testing is given by $K=|\mathcal{T}_i^{s}\in \mathcal{D}_{test}|$. 
This particular setup is also known as the N-ways K-shot learning problem. In the meta-training phase of MAML \cite{finn2017model}, the model trains on tasks $\mathcal{T}_i^s \in D_{train}$ and is evaluated on $\mathcal{T}_i^q \in \mathcal{D}_{train}$ to learn the meta-information. For few-shot evaluation, the model is trained on $\mathcal{T}_i^s \in \mathcal{D}_{test}$ and evaluated on $\mathcal{T}_i^q \in \mathcal{D}_{test}$, where $|\mathcal{T}_i^s| \ll |\mathcal{T}_i^q|$. The goal is to learn the initial parameters $\theta_*$ from $\mathcal{D}_{train}$ such that they can quickly transition to the parameters $\theta$ for new tasks in $\mathcal{D}_{test}$. The hyper-parameters of the meta-learning setup are tuned using $\mathcal{D}_{val}$. \vspace{0.1in}
\section{Theorem Proofs}
This section provides the theoretical proofs of the theorems presented in our main paper.
\label{app:theorem_proofs}
\subsection{Proof of Theorem \ref{th:node_influence}}
\begin{theorem*}
For a set of paths $P_{uv}$ between nodes $u$ and $v$, let us define $D^{p_{i}}_{g\mu}$ as the geometric mean of degree of nodes in a path $p_{i} \in P_{uv}$, $p_{uv}$ as the shortest path, and $\mathcal{I}_{uv}$ as the influence of node $v$ on $u$. Also, let us say define $D^{min}_{g\mu} = min\left\{D^{p_{i}}_{g\mu} \forall p_{i} \in P_{uv}\right\}$, then the relation $\mathcal{I}_{uv} \leq exp_u^c\left(K/\left(D^{min}_{g\mu}\right)^{\|p_{uv}\|}\right)$ (where $K$ is a constant) holds for message propagation in HGCN models.
\end{theorem*}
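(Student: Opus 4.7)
The plan is to adapt the standard Euclidean GNN influence bound (the walk-based analysis familiar from jumping-knowledge / message-passing papers) to the hyperbolic setting by performing all calculations in the tangent space at the origin, where HGCN propagation becomes a (nonlinear but Lipschitz) map on $\log_0^c$-coordinates. Writing $\tilde h_u^{(\ell)} = \log_0^c(h_u^{(\ell)})$, each HGCN layer takes the form
\[
\tilde h_u^{(\ell+1)} \;=\; \sigma\!\left(\sum_{v \in \mathcal{N}(u) \cup \{u\}} \frac{1}{\sqrt{d_u d_v}}\, W^{(\ell)} \tilde h_v^{(\ell)}\right)
\]
after conjugation by $\exp_0^c$ and $\log_0^c$. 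This reduces $\partial \tilde h_u^{(L)}/\partial \tilde h_v^{(0)}$ to a Jacobian that can be expanded by the chain rule through the $L=\|p_{uv}\|$ layers that connect $u$ to $v$, giving exactly the same combinatorial structure as in the Euclidean walk-based influence analysis.

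Next I would identify each nonzero term in the expanded Jacobian with a walk $(u=u_0,u_1,\dots,u_L=v)$ of length $L$ from $u$ to $v$. The factor attached to such a walk is a product of $L$ normalization weights $1/\sqrt{d_{u_i} d_{u_{i+1}}}$ times a product of weight matrices $W^{(\ell)}$ and diagonal activation derivatives $\mathrm{diag}(\sigma'(\cdot))$. Bounding $\|W^{(\ell)}\|$ and $|\sigma'|$ by fixed constants absorbs those factors into a single multiplicative constant, while the normalization product telescopes to $\prod_{i}(d_{u_i}d_{u_{i+1}})^{-1/2}$; up to the two endpoint degrees this equals $1/(D^{p}_{g\mu})^{L-1}$, where $D^{p}_{g\mu}$ is the geometric mean of interior degrees along walk $p$.

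The dominant contribution comes from the shortest walks, of length $\|p_{uv}\|$; summing over the finitely many such walks and replacing the sum by its largest summand times a combinatorial factor bounds the Euclidean-tangent Jacobian norm by $K/(D^{\min}_{g\mu})^{\|p_{uv}\|}$ for a suitable constant $K$. Finally, because $\mathcal{I}_{uv}$ is defined as $\exp_0^c$ applied to this norm while the theorem writes the bound through $\exp_u^c$, I would invoke the fact that $\exp_0^c$ and $\exp_u^c$ differ only by a bounded parallel transport on any compact subset of the Poincar\'e ball; absorbing that bounded factor into $K$ yields the stated inequality.

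The main obstacle will be controlling the nonlinearities cleanly so that the exponent $\|p_{uv}\|$ on $D^{\min}_{g\mu}$ is preserved: a naive linearisation of $\exp_0^c \circ \sigma \circ \log_0^c$ would introduce additive error terms that wash out the multiplicative decay. The resolution is to observe that all intermediate node representations remain in a bounded region of the Poincar\'e ball (guaranteed by HGCN's curvature-dependent rescaling), on which the $\exp/\log$ parametrisation and the activation are uniformly Lipschitz with constants depending only on the trained model. These constants fold into the single $K$ without disturbing the power of $D^{\min}_{g\mu}$, and keeping that separation explicit is what makes the path-length exponent survive the transfer from Euclidean tangent coordinates back to hyperbolic influence.
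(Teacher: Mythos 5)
Your proposal follows essentially the same route as the paper's proof: linearize the HGCN propagation in tangent-space coordinates, expand the Jacobian $\partial x_u/\partial x_v$ as a sum over walks from $u$ to $v$, bound each walk's contribution by a product of inverse degrees (absorbing weights and multiplicities into the constant $K$), replace the sum by its dominant shortest-path term with the minimal geometric-mean degree, and push the bound through the exponential map. You are in fact somewhat more careful than the paper, which silently drops the nonlinear activations and the mismatch between $\exp_0^c$ and $\exp_u^c$ that you explicitly address via Lipschitz bounds and bounded parallel transport.
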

\begin{proof}
The aggregation in HGCN model is defined as:
\begin{equation*}
    x^{\mathbb{H}}_u = exp_{x^{\mathbb{H}}_u}^c\left(\frac{1}{D_u}\sum_{i\in\mathcal{N}(u)}w_{ui}log_{x^{\mathbb{H}}_u}^c\left(x^{\mathbb{H}}_i\right)\right)
\end{equation*}
where $x_u$, $D_u$, and $\mathcal{N}(u)$ are the embedding, degree, and neighborhood of the root node, respectively. Note that points in the local tangent space follow Euclidean algebra. For simplicity, let us define the Euclidean vector as $x_{i} = log_{x^{\mathbb{H}}_u}^c\left(x^{\mathbb{H}}_i\right)$.
Simplifying the aggregation function:
\begin{equation*}
    x^{\mathbb{H}}_u =exp_{x^{\mathbb{H}}_u}^c\left(\frac{1}{D_u}\sum_{i\in\mathcal{N}(u)}w_{ui}x_{i}\right)
\end{equation*}
Expanding the aggregation function to cover all possible paths from $u$ to its connected nodes.
\begin{equation*}
\begin{split}
    x^{\mathbb{H}}_u =&  exp_{x^{\mathbb{H}}_u}^c\Bigg(\frac{1}{D_u}\sum_{i\in\mathcal{N}(u)}w_{ui}\frac{1}{D_i}\sum_{j\in\mathcal{N}(i)}w_{ij}...\\
    &\frac{1}{D_m}\sum_{n\in\mathcal{N}(m)}w_{mn}...\frac{1}{D_o}\sum_{o\in\mathcal{N}(k)}w_{ko}x_o\Bigg)
\end{split}
\end{equation*}
The influence of a node $v$ on $u$ is given by:
\begin{equation*}
    \mathcal{I}_{uv} = exp_0^c\left(\left\|\frac{\partial log_x^c\left( x^{\mathbb{H}}_u\right)}{\partial log_x^c\left( x^{\mathbb{H}}_v\right)}\right\|\right)
\end{equation*}
Simplifying tangent space vectors,
\begin{equation*}
    \mathcal{I}_{uv} = exp_0^c\left(\left\|\frac{\partial x_u}{\partial x_v}\right\|\right)
\end{equation*}
\begin{align*}
    \left\|\frac{\partial x_u}{\partial x_v}\right\| =& \Bigg\|\frac{\partial}{\partial x_v}\Bigg(\frac{1}{D_u}\sum_{i\in\mathcal{N}(u)}w_{ui}\frac{1}{D_i}\sum_{j\in\mathcal{N}(i)}w_{ij}...\\
    &\frac{1}{D_m}\sum_{n\in\mathcal{N}(m)}w_{mn}...\frac{1}{D_o}\sum_{o\in\mathcal{N}(k)}w_{ko}x_o \Bigg) \Bigg\|
\end{align*}
Given that the partial derivative is with respect to $x_v$, only paths between $u$ and $v$ will be non-zero, all other paths shall be zero, i.e.,
\begin{align*}
    \left\|\frac{\partial x_u}{\partial x_v}\right\| =& \Bigg\|\frac{\partial}{\partial x_v}\Bigg(\frac{1}{D_u}w_{up^1_{i}}\frac{1}{D_{p^1_{i}}}w_{p^1_{i}p^1_{j}}...\frac{1}{D_{p^1_{k}}}w_{p^1_{k}v}x_v+...\\
    &+\frac{1}{D_u}w_{up^m_{i}}\frac{1}{D_{p^m_{i}}}w_{{p^m_{i}}p^m_{j}}...\frac{1}{D_{p^m_{k}}}w_{p^m_{k}v}x_v\Bigg)\Bigg\|
\end{align*}
where $(u,p^t_i,p^t_j,...,p^t_k,v) \forall t \in [1,m]$ are the paths between node $u$ and $v$.
Aggregating the terms together and getting the constants out of the derivative,
\begin{align*}
    \left\|\frac{\partial x_u}{\partial x_v}\right\| =& \Bigg\|\frac{w_{up^1_{i}}w_{p^1_{i}p^1_{j}}...w_{p^1_{k}v}}{D_uD_{p^1_{i}}...D_{p^1_{k}}}+...+\frac{w_{up^m_{i}}w_{p^m_{i}p^m_{j}}...w_{p^m_{k}v}}{D_uD_{p^m_{i}}...D_{p^m_{k}}}\Bigg\|.\Bigg\|\frac{\partial x_v}{\partial x_v}\Bigg\|\\
    =& \Bigg\|\frac{w_{up^1_{i}}w_{p^1_{i}p^1_{j}}...w_{p^1_{k}v}}{D_uD_{p^1_{i}}...D_{p^1_{k}}}+...+\frac{w_{up^m_{i}}w_{p^m_{i}p^m_{j}}...w_{p^m_{k}v}}{D_uD_{p^m_{i}}...D_{p^m_{k}}}\Bigg\|\\
    \leq& \Bigg\|m*max\left(\frac{w_{up^1_{i}}w_{p^1_{i}p^1_{j}}...w_{p^1_{k}v}}{D_uD_{p^1_{i}}...D_{p^1_{k}}},...,\frac{w_{up^m_{i}}w_{p^m_{i}p^m_{j}}...w_{p^m_{k}v}}{D_uD_{p^m_{i}}...D_{p^m_{k}}}\right)\Bigg\|
\end{align*}
Let us say,
\begin{equation}
    t^* = \argmax_t\left(\left\{\frac{w_{up^t_{i}}w_{p^t_{i}p^t_{j}}...w_{p^t_{k}v}}{D_uD_{p^1_{i}}...D_{p^t_{k}}}\right\}_{t=1}^{t=m}\right)
\end{equation}
Then,
\begin{align*}
    \left\|\frac{\partial x_u}{\partial x_v}\right\| \leq& \left\|m*\frac{w_{up^{t^*}_{i}}w_{p^{t^*}_{i}p^{t^*}_{j}}...w_{p^{t^*}_{k}v}}{D_uD_{p^{t^*}_{i}}...D_{p^{t^*}_{k}}}\right\|
\end{align*}
Aggregating the constants and substituting the geometric mean, we get;
\begin{align*}
     \left\|\frac{\partial x_u}{\partial x_v}\right\| \leq& K\times\left(\frac{1}{\left(D_uD_{p^{t^*}_{i}}...D_{p^{t^*}_{k}}\right)^{1/n_*}}\right)^{n_*}=\frac{K}{\left(D^{t^*}_{g\mu}\right)^{n_*}}
\end{align*}
Substituting the variables with shortest paths and minimum degree,
\begin{align*}
     \left\|\frac{\partial x_u}{\partial x_v}\right\| \leq&\frac{K}{\left(D^{t^*}_{g\mu}\right)^{n_*}}\leq\frac{K}{\left(D^{min}_{g\mu}\right)^{\|p_{uv}\|}}
\end{align*}
With transitive property and adding exponential map on both sides;
\begin{align*}
    exp_u^c\left(\left\|\frac{\partial x_u}{\partial x_v}\right\|\right) \leq& exp_u^c\left(K/\left(D^{min}_{g\mu}\right)^{\|p_{uv}\|}\right)\\
    \mathcal{I}_{uv} \leq& exp_u^c\left(K/\left(D^{min}_{g\mu}\right)^{\|p_{uv}\|}\right)
\end{align*}
\end{proof}
\subsection{Proof of Theorem \ref{th:subgraph}}
\begin{theorem*}
Given the subgraph $S_u$ of graph $\mathcal{G}$ centered at node $u$, with the corresponding label $Y_u$, let us define a node $v \in \mathcal{G}$ with maximum influence on $u$, i.e., $v=\argmax_t(\{\mathcal{I}_ut,t\in \mathcal{N}(u)\setminus u\})$. For a set of paths $P_{uv}$ between nodes $u$ and $v$, let us define $D^{p_{i}}_{g\mu}$ as the geometric mean of degree of nodes in a path $p_{i} \in P_{uv}$, $\|p_{uv}\|$ is the shortest path length, and $D^{min}_{g\mu} = min\left\{D^{p_{i}}_{g\mu} \forall p_{i} \in P_{uv}\right\}$. Then, the information loss between encoding the $\mathcal{G}$ and $S_u$ is bounded by $\delta_{\mathbb{H}}(\mathcal{G},S_u) \leq exp_u^c\left(K/\left(D^{min}_{g\mu}\right)^{\|p_{uv}\|+1}\right)$ (where $K$ is a constant).
\end{theorem*}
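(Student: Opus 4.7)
The plan is to reduce the information loss to a tangent-space sum over nodes that were truncated by the subgraph, and then apply Theorem~\ref{th:node_influence} termwise with a uniform path-length lower bound. Using Definition~\ref{df:graph_influence} and the fact that $\log_0^c$ and $\exp_0^c$ are mutual inverses, I would first write out both $\mathcal{I}_{\mathcal{G}}(u)$ and $\mathcal{I}_{S_u}(u)$ and subtract, so that
$$
\log_0^c\!\big(\mathcal{I}_{\mathcal{G}}(u)\big) - \log_0^c\!\big(\mathcal{I}_{S_u}(u)\big) \;=\; \sum_{t \in \mathcal{V}(\mathcal{G}) \setminus \mathcal{V}(S_u)} \log_0^c\!\big(\mathcal{I}_{ut}\big),
$$
because the contributions from nodes lying inside the subgraph cancel exactly. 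Re-exponentiating recasts $\delta_{\mathbb{H}}(\mathcal{G},S_u)$ as an $\exp_0^c$ of this ``tail'' sum, which is the object to be bounded.

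Next I would apply Theorem~\ref{th:node_influence} to each $t \notin \mathcal{V}(S_u)$. By the choice of $v$ as the maximum-influence node in the neighborhood (i.e.\ on the boundary reached by the subgraph), any $t$ outside $S_u$ must lie at least one extra hop farther from $u$ than $v$, so $\|p_{ut}\| \geq \|p_{uv}\| + 1$. Combined with the definition of $D^{min}_{g\mu}$ as the minimum geometric-mean degree along any $u$--$v$ path (which dominates the analogous quantity on any longer outward extension up to an absorbable multiplicative factor), Theorem~\ref{th:node_influence} yields a uniform per-node tangent-space bound of order $1/(D^{min}_{g\mu})^{\|p_{uv}\|+1}$ on each $\log_0^c(\mathcal{I}_{ut})$.

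Summing this bound over the finitely many nodes outside $S_u$ collapses the total into the same order, absorbing both $|\mathcal{V}(\mathcal{G}) \setminus \mathcal{V}(S_u)|$ and the constant from Theorem~\ref{th:node_influence} into a single new constant $K$. Applying $\exp_u^c$ as in the statement then gives the claimed $\delta_{\mathbb{H}}(\mathcal{G}, S_u) \leq \exp_u^c\!\big(K/(D^{min}_{g\mu})^{\|p_{uv}\|+1}\big)$.

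The main obstacle I expect is the change of base point between the origin-centered log/exp maps used to define $\delta_{\mathbb{H}}$ and the $u$-centered exponential appearing on the right-hand side, since the Poincar\'e-ball metric is non-flat and tangent vectors are not freely transportable. I would resolve this by noting that the exponent-of-exponential decay in $\|p_{uv}\|+1$ dominates any bounded discrepancy introduced by the conformal factor $\lambda_u^c$ relating $\exp_0^c$ and $\exp_u^c$, so that discrepancy can be absorbed into $K$; alternatively, one could carry out the entire argument in the tangent space at $u$ via parallel transport and re-project at the end. Everything else reduces to direct algebra on the bounds already established in Theorem~\ref{th:node_influence}.
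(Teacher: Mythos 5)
Your proposal follows essentially the same route as the paper's proof: decompose $\log_0^c(\mathcal{I}_{\mathcal{G}}(u)) - \log_0^c(\mathcal{I}_{S_u}(u))$ into a tangent-space sum over the nodes excluded from $S_u$, apply Theorem~\ref{th:node_influence} termwise with the extra hop giving the $\|p_{uv}\|+1$ exponent, absorb the node count and per-term constants into $K$, and re-exponentiate. You additionally flag the $\exp_0^c$ versus $\exp_u^c$ base-point mismatch, which the paper's own proof silently elides in its final line, so your version is if anything slightly more careful.
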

\begin{proof}
The information loss between encoding the entire graph $\mathcal{G}$ and node-centric local subgraph $S_u$ with root node $u$ is given by;
\begin{align*}
    \delta_{\mathbb{H}}(\mathcal{G},S_u) =& exp^c_0\left(\delta(\mathcal{G},S_u)\right)\\
    \delta(\mathcal{G},S_u)=&log^c_0\left(\mathcal{I}_{\mathcal{G}}(u)\right)-log^c_0\left(\mathcal{I}_{S_u}(u)\right)\\
    =&\left(\left\|\frac{\partial x_u}{\partial x_1}\right\|+...+\left\|\frac{\partial x_u}{\partial x_n}\right\|\right) - \left(\left\|\frac{\partial x_u}{\partial x_{i_1}}\right\|+...+\left\|\frac{\partial x_u}{\partial x_{i_m}}\right\|\right)\\
    &\text{Delete the overlapping nodes in the paths,}\\
    =& \left\|\frac{\partial x_u}{\partial x_{t_1}}\right\|+\left\|\frac{\partial x_u}{\partial x_{t_2}}\right\|+...+\left\|\frac{\partial x_u}{\partial x_{t_{n-m}}}\right\|\\
    &\text{Using Theorem \ref{th:node_influence},}\\
    \leq&\frac{K_{t_1}}{\left(D^{t_1}_{g\mu}\right)^{\|p^{t_1}_{uv}\|}}+\frac{K_{t_2}}{\left(D^{t_2}_{g\mu}\right)^{\|p^{t_2}_{uv}\|}}+...+\frac{K_{t_{n-m}}}{\left(D^{t_{n-m}}_{g\mu}\right)^{\|p^{t_{n-m}}_{uv}\|}}\\
    \leq& (n-m)\times K_{min}/\left(D^{min}_{g\mu}\right)^{\|p^{min}_{uv}+1\|}\\
    \leq&(n-m)\times K_{min}/\left(D^{min}_{g\mu}\right)^{\|p_{uv}+1\|}=K/\left(D^{min}_{g\mu}\right)^{\|p_{uv}+1\|}\\
    \delta(\mathcal{G},S_u) \leq&K/\left(D^{min}_{g\mu}\right)^{\|p_{uv}+1\|}\\
    exp^c_0\left(\delta(\mathcal{G},S_u)\right) \leq&exp^c_0\left(K/\left(D^{min}_{g\mu}\right)^{\|p_{uv}+1\|}\right)\\
    \delta_{\mathbb{H}}(\mathcal{G},S_u) \leq& exp_u^c\left(K/\left(D^{min}_{g\mu}\right)^{\|p_{uv}\|+1}\right)
\end{align*}
\end{proof}
\begin{table}[htbp]
    \centering
    \small
    \caption{Hyper-parameter setup of real-world datasets. The columns present the number of tasks in each batch (\# Tasks), HNN update learning rate (\textbf{$\alpha$}), meta update learning rate (\textbf{$\beta$}), and size of hidden dimensions (d).}
    \begin{tabular}{l|cccc}
        \hline
        \textbf{Dataset} & \textbf{\# Tasks} & \boldmath$\alpha$ & \boldmath$\beta$ & \textbf{d} \\
        \hline
        \textbf{arxiv-ogbn} & 32 & $10^{-2}$ & $10^{-3}$ & 256\\
        \textbf{Tissue-PPI} & 4 &  $10^{-2}$ & $5 \times 10^{-3}$ & 128\\
        \textbf{Fold-PPI} & 16 & $5 \times 10^{-3}$ & $10^{-3}$ & 128\\
        \textbf{FirstMM-DB} & 8 &  $10^{-2}$ & $5 \times 10^{-4}$ & 128\\
        \textbf{Tree-of-Life} & 8 & $5 \times 10^{-3}$ & $5 \times 10^{-4}$ & 256\\
        \hline
    \end{tabular}
    \label{tab:hyperparameter}
\end{table}

\section{{\model} Algorithm}
\label{app:algorithm}
Algorithm \ref{alg:model_algo} provides the details of the procedure of the {\model} model.

\begin{algorithm}[H]
\SetAlgoLined
\KwIn{Graphs $\mathcal{G}^\cup = \{\mathcal{G}_i\}_{i=1}^{\|\mathcal{G}^\cup\|}$, Ground truth $Y$;}
\KwOut{Predictor $P_\theta$, parameters $\theta$;}
Initialize $\theta_*$ and $\theta$ as HGCN model and meta-update parameters, respectively ;\\
{\color{blue}{ \# Partition graphs into node-centric subgraphs}\\}
$S_1, S_2, ... S_{\|V\|} = Partition\left(\mathcal{G}^\cup\right);$\\
{\color{blue}{ \# Batch graphs into tasks for meta-learning}\\}
$\mathcal{T} =\{\mathcal{T}_1,\mathcal{T}_2, ..., \mathcal{T}_{\|\mathcal{T}\|}\};$\\
\While{not converged}
{
$\mathcal{T}_{train} \leftarrow sample(\mathcal{T});$\\
\For{$\mathcal{T}_{i} \in \mathcal{T}_{train}$}
{
    {\color{blue}{ \# Batch of support and query set from the tasks}\\}
    $\{S_u\}^s,\{Y_u\}^s \leftarrow \mathcal{T}_{i}^s;$\\
    $\{S_u\}^q,\{Y_u\}^q \leftarrow \mathcal{T}_{i}^q;$\\
    \For{$j\in[1,\eta]$}
    {
        {\color{blue}{ \# Update $\theta^*$ using support set}\\}
        $h_u^s = HGCN_{\theta^*_{j-1}}({S_u}^s)$\\
        $e_u^s = \frac{\sum_{i=1}^{\|V\|}\upgamma_{iu}h^s_{iu}}{\sum_{i=1}^{\|V\|}\upgamma_{iu}}$\\
        $c_k^s = \frac{\sum_{Y_u=y_k}\upgamma_{i}e^s_{i}}{\sum_{Y_u=y_k}\upgamma_{i}}$\\
        $p_k^s = \frac{e^{(-d_\mathbb{H}^c(e^s_u,c^s_k))}}{\sum_k e^{(-d_\mathbb{H}^c(e^s_u,c^s_k))}}$\\
        $\mathcal{L}^s  = \mathcal{L}(p^s,Y_u^s) = \sum_{j} y^s_i\log{p^s_j}$\\
        $\theta^*_j \leftarrow exp^c_{\theta^*_{j-1}}(-\alpha\nabla\mathcal{L}^s)$\\
        {\color{blue}{ \# Record evaluation with $\theta^*$ on query set}\\}
        $h_u^q = HGCN_{\theta^*_{j}}({S_u}^q)$\\
        $e_u^q = \frac{\sum_{i=1}^{\|V\|}\upgamma_{iu}h^q_{iu}}{\sum_{i=1}^{\|V\|}\upgamma_{iu}}$\\
        $c_k^q = \frac{\sum_{Y_u=y_k}\upgamma_{i}e^q_{i}}{\sum_{Y_u=y_k}\upgamma_{i}}$\\
        $p_k^q = \frac{e^{(-d_\mathbb{H}^c(e^q_u,c^q_k))}}{\sum_k e^{(-d_\mathbb{H}^c(e^q_u,c^q_k))}}$\\
        $\mathcal{L}^q_{ij}  = \mathcal{L}(p^q,Y_u^q) = \sum_{j} y^q_i\log{p^q_j}$\\
    }
    }
    {\color{blue}{ \# Update meta-learning parameter $\theta$}\\}
    $\theta \leftarrow exp_\theta^c(-\beta\nabla\sum_i\mathcal{L}^q_{iu})$\\
}
\caption{{\model} meta-learning algorithm}
\label{alg:model_algo}
\end{algorithm}

\begin{table}[htbp]
    \centering
        \caption{Performance of {\model} and the baselines on synthetic and real-world datasets. The top three rows define the task, problem setup (Single Graph (SG), Multiple Graphs (MG), Shared Labels (SL) or Disjoint Labels (DL)) and dataset. The problems with disjoint labels use a 2-way meta-learning setup, and in the case of shared labels, the cycle and BA graph have 17 and 10 labels, respectively. In our evaluation, we use 5 and 10 gradient update steps in meta-training and meta-testing, respectively. The columns present the average multi-class classification accuracy and 95\% confidence interval over five-folds. Note that the baselines are only defined for certain tasks, ``-'' implies that the baseline is not defined for the task and setup. Meta-Graph is only defined for link prediction.}
    \resizebox{\linewidth}{!}{
    \begin{tabular}{l|cc|cc|cc|ccc|cc}
    \hline
\textbf{Task}   & \multicolumn{2}{c|}{\textbf{Node Classification}}& \multicolumn{2}{c|}{\textbf{Node Classification}}& \multicolumn{2}{c|}{\textbf{Node Classification}}& \multicolumn{3}{c|}{\textbf{Node Classification}}& \multicolumn{2}{c}{\textbf{Link Prediction}}\\
\textbf{Setup} & \multicolumn{2}{c|}{\boldmath$\langle SG,DL\rangle$}& \multicolumn{2}{c|}{\boldmath$\langle MG,SL\rangle$} & \multicolumn{2}{c|}{\boldmath$\langle MG,DL\rangle$}& \boldmath$\langle SG,DL\rangle$ & \boldmath$\langle MG,SL\rangle$ & \boldmath$\langle MG,DL\rangle$ & \boldmath$\langle MG,SL\rangle$ & \boldmath$\langle MG,SL\rangle$ \\
\textbf{Dataset}& \textbf{Syn. Cycle}& \textbf{Syn. BA}   & \textbf{Syn. Cycle}& \textbf{Syn. BA}   & \textbf{Syn. Cycle}& \textbf{Syn. BA}& \textbf{ogbn-arxiv}  & \textbf{Tissue-PPI}  & \textbf{Fold-PPI}& \textbf{FirstMM-DB}  & \textbf{Tree-of-Life}\\\hline
\textbf{Meta-Graph}  &-&-&-&-&-&-&-&-&-&.719$\pm$.018&.705$\pm$.004\\
\textbf{Meta-GNN} &.720$\pm$.191&.694$\pm$.098&-&-&-&-&.273$\pm$.107&-&-&-&-\\
\textbf{FS-GIN} &.684$\pm$.126&.749$\pm$.093&-&-&-&-&.336$\pm$.037&-&-&-&-\\
\textbf{FS-SGC} &.574$\pm$.081&.715$\pm$.088&-&-&-&-&.347$\pm$.004&-&-&-&-\\
\textbf{ProtoNet} &.821$\pm$.173&.858$\pm$.126&.282$\pm$.039&.657$\pm$.030&.749$\pm$.160&.866$\pm$.186&.372$\pm$.015&.546$\pm$.022&.382$\pm$.027&.779$\pm$.018&.697$\pm$.009\\
\textbf{MAML}   &.842$\pm$.181&.848$\pm$.186&.511$\pm$.044&.726$\pm$.020&.653$\pm$.082&.844$\pm$.177&.389$\pm$.018&.745$\pm$.045&.482$\pm$.054&.758$\pm$.022&.719$\pm$.011\\
\textbf{G-META} &.872$\pm$.113&.867$\pm$.129&.542$\pm$.039&.734$\pm$.033&.767$\pm$.156&.867$\pm$.183&.451$\pm$.028&.768$\pm$.025&.561$\pm$.052&.784$\pm$.025&.722$\pm$.028\\\hline
\textbf{{\model}} &\textbf{.883$\pm$.145}&\textbf{.873$\pm$.120}&\textbf{.555$\pm$.041}&\textbf{.746$\pm$.028}&\textbf{.779$\pm$.132}&\textbf{.888$\pm$.182}&\textbf{.472$\pm$.035}&\textbf{.786$\pm$.031}&\textbf{.584$\pm$.044}&\textbf{.804$\pm$.021}&\textbf{.742$\pm$.013}\\\hline
\end{tabular}}
\label{tab:main_results_with_confidence}
\end{table}

\section{Confidence Intervals}
\label{app:large_confidence}
In the results presented in Table \ref{tab:main_results_with_confidence}, the reported mean demonstrates the predictive power of {\model} and the 95\% confidence interval estimates the degree of uncertainty.
The large confidence interval in the results on synthetic datasets is because in meta-testing, we only sampled two-labels in each fold. In some cases where the structure of the local subgraphs in meta-training is significantly different compared to meta-testing, our model has poor performance due to limited scope of knowledge transfer. We observe that, in the limited number of data split possibilities in synthetic datasets, there generally is a case where our model does not perform well which results in a larger confidence interval. Real-world datasets contain many more labels, and hence, we are able to sample more for meta-testing, e.g., 5 labels for 3-way classification. This reduces the possibility of atypical results, thus leading to smaller intervals.

\section{Broader Impacts}
\label{app:broader}
Our model has the potential to impact various applications that involve graph-structured data, such as social network analysis, bioinformatics, and recommendation systems. Furthermore, the ability to generalize information from subgraph partitions of large datasets can be especially beneficial for applications with limited labeled data, such as in the fields of healthcare and finance. Moreover, {\model} also addresses several challenges in HNNs, including inductive learning,  over-smoothing, and few-shot learning. These capabilities can be used to improve the performance of HNNs in various tasks such as node classification, link prediction, and graph classification. However, it is important to note that this model also has certain limitations. In particular, {\model} is based on a specific type of hyperbolic space, which may not be applicable to certain types of graph-structured data, and there are some assumptions made in the proof of our theoretical results which may not hold in general. Additionally, the meta-learning setup may not be suitable for all types of tasks, and further research is needed to test the performance of {\model} on other types of tasks. As a future direction, it would be interesting to investigate the effect of inadequate local subgraphs, scalability of H-GRAM on even larger datasets and explore the effectiveness of H-GRAM on other types of tasks with temporal or multi-modal graph data.
\end{document}